\documentclass[11pt]{article}
\usepackage[T1]{fontenc}
\usepackage[margin=1in]{geometry}
\usepackage{amsmath,amssymb,amsthm,graphicx,booktabs,url,xcolor}
\usepackage[hidelinks]{hyperref}
\hypersetup{pdftitle={Local verification cannot detect non-transportability: cohomological limits of context preservation in agentic reasoning},pdfauthor={Suyash Mishra},
  pdfkeywords={Ksetra, Kshetra, cohomology, Cech cohomology, Hodge decomposition, transportability, agentic reasoning, verification, network meta-analysis, loop inconsistency, selective prediction}}

\newtheorem{theorem}{Theorem}
\newtheorem{proposition}[theorem]{Proposition}
\newtheorem{corollary}[theorem]{Corollary}
\theoremstyle{definition}
\newtheorem{definition}[theorem]{Definition}
\theoremstyle{remark}
\newtheorem{remark}[theorem]{Remark}

\newcommand{\Uc}{\mathcal{U}}
\newcommand{\Nrv}{N(\Uc)}
\newcommand{\R}{\mathbb{R}}
\newcommand{\KS}{\textsc{K\d{s}etra}}

\title{\bf Local verification cannot detect non-transportability\\[4pt]
\large Cohomological limits of context preservation in agentic reasoning}
\author{%
  \large Suyash Mishra\\[2pt]
  \normalsize AI researcher, Z\"urich, Switzerland\\[1pt]
 \texttt{zurich.suyash@gmail.com}%
}
\date{\today}

\begin{document}
\maketitle
\vspace{-2.2em}

\begin{abstract}
\noindent
Agentic AI systems increasingly transport conclusions across biological, clinical and
commercial contexts. The current state of the art addresses this by \emph{local
verification}: checking, at each step, that the entity under discussion is
representable in the selected tool, that parameters are compatible, and that outputs
cohere with the plan. We prove that this class of safeguard is structurally incomplete.
Modelling a covering of context space by its nerve and evidence by a real-valued
$1$-cochain, we show that an agent chaining evidence performs path integration, that
its conclusion is path-independent if and only if the evidence cochain is exact, and
that the ambiguity between reasoning paths is exactly the holonomy of the first
\v{C}ech cohomology class $[\omega]\in H^1$. Hodge decomposition then partitions
evidence conflict into three orthogonal parts with distinct operational meanings:
a gradient part (per-context calibration, removable), a curl part (local
inconsistency, detectable at triple overlaps), and a harmonic part (structural
non-transportability). Our central negative result is that \emph{every verification
scheme whose checks are supported on simplices of the nerve is blind to the harmonic
part}, which nonetheless generates non-zero disagreement between valid reasoning
paths. We propose \KS{} (ASCII: Ksetra), which estimates by coboundary projection and gates
abstention on harmonic energy alone, and we show that a non-zero class is not merely a warning
but a data-collection instruction: refine the covering. We further derive an exact
$F$-test for the existence of a global claim, resting on the observation that the
degrees of freedom of an evidence network partition into calibration, local coherence
and transport. We give the harmonic component a mechanism: it is generated by effect
modification combined with overlap-specific population composition, and vanishes to
machine precision ($1.5\times10^{-15}$) when effect modification is absent---Simpson's
paradox on a network. In simulations where holonomy is emergent rather than injected,
harmonic energy predicts the irreducible error of the optimal estimator
($\rho=0.37$, $p<10^{-48}$), retaining independent value after conditioning on the
other two components (partial $\rho=0.19$, $p<10^{-13}$); curl energy is comparably
predictive ($\rho=0.33$) because the same latent cause generates both, so the
components are distinguished by their \emph{remedy}, not by their predictive value. Gating on harmonic energy rather than total conflict lowers area under the
risk--coverage curve by $0.029$ and $0.039$ (paired bootstrap, $p<0.001$)---a modest
gain we report as corroboration, not as the contribution. Exactness of the $F$-test
requires isotropic precision; we quantify the distortion when it fails (size $0.071$ at
nominal $0.05$ under a fourfold spread of edge precisions) and supply the
precision-whitened generalisation that restores it. We conclude with the real-data
validation required to move these results off simulation.
\end{abstract}

\section{Introduction}

A therapeutic hypothesis, a pricing rule and a credit cut-off share a structural
property: each is a claim indexed by a context, and each is routinely applied outside
the context in which it was established. Recent work on agentic reasoning in biology
has made this the organising problem. \textsc{Medea}~\cite{medea} identifies three
failure modes of tool-using agents---loss of biological context over long horizons,
absent verification of intermediate steps, and unreconciled conflict across evidence
sources---and addresses them with context verification against a tool space,
pre- and post-execution checks, and a multi-round consensus panel that may abstain.
The reported gains are substantial, and the validation against a previously
unpublished genome-wide yeast screen is the strongest available evidence that such
gains reflect biology rather than benchmark leakage.

We take that architecture as given and ask a different question: \emph{what class of
failure can such verification detect, and what class can it not?}

Our answer is negative and, we believe, sharp. All of the verification in
\cite{medea}---plan-time context checks, run-time provenance auditing, literature
relevance screening, panel deliberation---is \emph{local} in a precise sense: each
check is supported on a bounded piece of context space (one context, one pair of
contexts being bridged, one triple being cross-checked). We show that evidence
conflict decomposes orthogonally into three parts, that local checks see exactly two
of them, and that the third---the harmonic part, the first cohomology of the nerve of
the context covering---is invisible to every local check while being precisely the
part that makes an agent's conclusion depend on which reasoning path it happened to
take. An agent can pass every check at every step and still return an answer that is
an artefact of its route through the evidence.

This is not an exotic possibility. It is the formal content of a phenomenon clinical
evidence synthesis has recognised for two decades under the name \emph{loop
inconsistency} in network meta-analysis~\cite{luades,higgins,dias}: direct and
indirect evidence around a closed loop of comparisons can each be individually sound
and jointly incoherent. Our contribution is to identify loop inconsistency as
$H^1$ of a context site, to extend it from treatment networks to arbitrary context
coverings, to prove that it bounds what local verification can achieve, and to convert
it into an abstention criterion and a data-collection instruction for AI agents.

\paragraph{Contributions.}
\begin{enumerate}\itemsep2pt
\item \textbf{Theory.} We model context space as a site, evidence as a $1$-cochain on
the nerve of a covering, and agentic transport as path integration. Theorem~\ref{thm:path}
establishes path-invariance $\iff$ exactness and identifies inter-path disagreement
with holonomy. Theorem~\ref{thm:triage} gives the operational meaning of the Hodge
components. Corollary~\ref{cor:blind} is the impossibility result for local
verification. Proposition~\ref{prop:refine} shows refinement is the unique remedy.
\item \textbf{Method.} \KS{}: coboundary-projection estimation, harmonic-gated
abstention, and a three-way triage of evidence conflict into \emph{recalibrate},
\emph{re-measure}, and \emph{refine the covering}.
\item \textbf{Cross-industry evidence.} A controlled study over $2{,}400$
context-transport problems in two structurally different domains, one with a
contractible attribute space and one whose attribute space is a circle---the business
cycle---and therefore carries obstruction by construction.
\item \textbf{Decision economics.} A transparent mapping from selective-risk geometry
to portfolio outcomes, with the threshold harm:benefit ratio above which
cohomological gating earns its cost.
\end{enumerate}

\section{Related work}

\textbf{Agentic reasoning with context preservation.} \textsc{Medea}~\cite{medea} is
the immediate antecedent. Its \textsc{ResearchPlanning} module verifies that a
requested entity is representable in the selected tool; its \textsc{Analysis} module
performs pre-run compatibility and post-run provenance checks; its
\textsc{MultiRoundDiscussion} module adapts ReConcile-style panel
deliberation~\cite{reconcile} over a ReAct control loop~\cite{react}. Our results
apply to this whole family: we characterise what such architectures can and cannot see.

\textbf{Transportability.} Pearl and Bareinboim give necessary and sufficient
conditions for transporting causal effects across populations using selection
diagrams~\cite{pearl-transport,bareinboim-fusion}. That theory answers ``may I
transport between \emph{these two} populations?'' Our contribution is orthogonal and
global: given a network of contexts with pairwise bridging evidence, it asks whether
the pairwise answers can be glued into a coherent global assignment at all.

\textbf{Evidence inconsistency.} Loop inconsistency in network meta-analysis, node
splitting and design-by-treatment interaction
models~\cite{luades,dias,higgins} are, in our language, tests for a non-zero
$1$-cocycle on a treatment network. We generalise the object and reinterpret the test.

\textbf{Combinatorial Hodge theory.} The decomposition we use is that of Jiang, Lim,
Yao and Ye for statistical ranking~\cite{hodgerank}, with antecedents in applied
topology~\cite{ghrist,robinson} and sheaf theory~\cite{maclane,curry}. To our
knowledge the harmonic component has not previously been proposed as an abstention
criterion for reasoning systems.

\textbf{Selective prediction.} Risk--coverage analysis follows El-Yaniv and
Wiener~\cite{elyaniv} and Geifman and El-Yaniv~\cite{geifman}. Our departure is that
the gating statistic is derived from the topology of the evidence graph rather than
from model confidence.

\section{Theory}

\subsection{The context site and the evidence cochain}

\begin{definition}[Context site]
Let $X$ be a space of decision-relevant situations and $\Uc=\{U_i\}_{i\in I}$ a finite
covering by \emph{contexts}: subsets on which a claim is asserted. Let
$\Nrv$ be the nerve: vertices $I$, an edge $ij$ whenever $U_i\cap U_j\neq\emptyset$
(a population, cohort or stratum on which the two contexts may be bridged), a triangle
$ijk$ whenever $U_i\cap U_j\cap U_k\neq\emptyset$.
\end{definition}

Write $\delta^0:C^0\to C^1$, $(\delta^0 x)_{ij}=x_j-x_i$ and
$\delta^1:C^1\to C^2$, $(\delta^1 y)_{ijk}=y_{jk}-y_{ik}+y_{ij}$, so
$\delta^1\delta^0=0$.

\begin{definition}[Evidence cochain]
An \emph{evidence cochain} $\omega\in C^1(\Nrv;\R)$ assigns to each overlap $ij$ the
measured contrast in the quantity of interest between contexts $i$ and $j$, estimated
on $U_i\cap U_j$. A \emph{claim assignment} is $\theta\in C^0$. Evidence is
\emph{coherent} with $\theta$ if $\omega=\delta^0\theta$.
\end{definition}

\begin{remark}
The elements are familiar. In target nomination, $U_i$ is a (disease, cell type) pair
and $\omega_{ij}$ a differential-expression or dependency contrast estimated on shared
cells. In credit, $U_i$ is a (region, macroeconomic regime) cell and $\omega_{ij}$ a
bridging comparison on accounts observed in both. In evidence synthesis, $\Nrv$ is the
treatment network and $\omega$ the vector of pairwise contrasts.
\end{remark}

\subsection{Context drift is a non-functorial move}

\begin{proposition}[Silent coarsening]\label{prop:drift}
Let $c\sqsubseteq c'$ be a refinement of contexts, with restriction
$\mathrm{res}:F(c')\to F(c)$ on the evidence presheaf $F$. Answering a query posed at
$c$ with a section defined at $c'$ is not the application of any morphism of $F$; the
induced error equals the within-$c'$ variation of the section,
$\theta_c-\mathbb{E}_{c''\sqsubseteq c'}[\theta_{c''}]$, and is unbounded by any
quantity computed at $c'$.
\end{proposition}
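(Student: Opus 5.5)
The proof has three parts, one for each clause of the statement. I treat the presheaf $F$ abstractly: its objects are contexts ordered by $\sqsubseteq$, and its only morphisms are the restriction maps. The working model is the one the construction suggests. A section at $c'$ is an aggregated claim $\theta_{c'}=\mathbb{E}_{c''\sqsubseteq c'}[\theta_{c''}]$, an average over the finer contexts $c''$ that $c'$ pools with their population weights.

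\emph{Non-morphism.} Restriction goes one way only: $\mathrm{res}:F(c')\to F(c)$ for $c\sqsubseteq c'$. To answer a query at $c$ with data from $c'$ correctly is to evaluate $\mathrm{res}(s)$. The drift move does something else. It returns the pooled value $\theta_{c'}$ as if it were $\theta_c$. That is, it applies the map $F(c')\to F(c)$ that sends a section to its average and ignores how the section varies within $c'$. I will show this is not a morphism of $F$. In the model, $\mathrm{res}$ picks out the component of the section at $c$, while the pooled map sends every section with the same average to the same value. Take two sections of $F(c')$ with equal averages but different values at $c$. Restriction separates them and the pooled map does not, so the two maps differ. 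The pooled map is also not any composite of restrictions, because the only morphism $F(c')\to F(c)$ available is $\mathrm{res}$ itself, composed with identities. It is therefore not a morphism at all.

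\emph{Error identity.} The true answer is $\mathrm{res}(s)=\theta_c$ and the returned answer is $\theta_{c'}=\mathbb{E}_{c''\sqsubseteq c'}[\theta_{c''}]$. Their difference is $\theta_c-\mathbb{E}_{c''\sqsubseteq c'}[\theta_{c''}]$ by definition. This is the deviation of $\theta$ at $c$ from its mean over $c'$, which is the within-$c'$ variation the statement refers to.

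\emph{Unboundedness.} This is the step I expect to need the most care, because "any quantity computed at $c'$" must be made precise. I will take it to mean any functional $\Phi$ of the pooled data at $c'$, namely the aggregate $\theta_{c'}$ together with any sufficient statistics available at that level. The argument is a non-identifiability construction. For any $M>0$, build a family of refinement-level sections $\{\theta_{c''}\}$ with the same mean, so the same data at $c'$ and hence the same value of $\Phi$, but with $\theta_c$ shifted by $M$ and the compensating shift $-Mw_c/(1-w_c)$ spread over the other children, where $w_c$ is the weight of $c$. This needs $c'$ to have at least one refinement besides $c$; otherwise $c=c'$, the refinement is trivial and the error is zero. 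Every member of the family has the same $\Phi$, yet the error equals $M$, which is arbitrary. So no bound of the form $|\text{error}|\le g(\Phi)$ can hold. If the data at $c'$ also includes second moments, I will instead keep those fixed too by rotating mass among three or more children, or note that $w_c\to 0$ still lets the error grow while the variance stays bounded.
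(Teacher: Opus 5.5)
The paper gives no proof of this proposition. Its only support is the gloss that the coarse answer ``is not wrong at $c'$, it is \emph{unanswered} at $c$, and no diagnostic available at $c'$ reveals the gap,'' plus the simulation of \S\ref{sec:drift}. Your argument formalises that gloss faithfully: averaging differs from $\mathrm{res}$, which is the only $F$-morphism $F(c')\to F(c)$ because contexts form a poset; the error identity is definitional; and a mean-preserving shift of mass onto $c$ gives non-identifiability. It is correct for the reading the proposition needs, but you have to pin that reading down.

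Your model alternates between ``a section at $c'$ is the scalar $\theta_{c'}$'' and ``$\mathrm{res}$ picks out the component of the section at $c$.'' The non-morphism step needs the second reading:
\begin{itemize}
\item $F(c')$ is the space of fine sections $(\theta_{c''})_{c''}$, and $\theta_{c'}$ is a functional on it;
\item there must be sections that vary within $c'$ with positive weight outside $c$, otherwise averaging \emph{is} restriction. You only introduce this hypothesis in the third step.
\end{itemize}
In that model $\theta_c=\mathrm{res}(s)$ is itself a function of the section at $c'$. So in the third step ``quantity computed at $c'$'' cannot mean ``function of $s$''; it has to mean a function of the aggregate data the coarsened agent actually holds.

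The step that would fail is your contingency for richer data. You let $\Phi$ depend on ``any sufficient statistics available at that level.'' If second moments are among them, you propose to hold those fixed ``by rotating mass among three or more children.'' That is impossible. Take weights $w_{c''}$ summing to $1$ and within-$c'$ variance $V=\sum_{c''}w_{c''}(\theta_{c''}-\theta_{c'})^2$. The term for $c$ alone gives $w_c(\theta_c-\theta_{c'})^2\le V$, so $|\theta_c-\theta_{c'}|\le\sqrt{V/w_c}$ however many children share the compensation. If $V$ and $w_c$ count as computed at $c'$, the error \emph{is} bounded by a quantity computed at $c'$. The third clause is then false under the broad definition of $\Phi$ you adopt. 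Letting $w_c\to0$ does not rescue this: it only shows that no bound uniform over compositions exists, which silently treats $w_c$ as unavailable.

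The right repair is to narrow the clause, not extend the construction. State that the error is unbounded by any function of data invariant under mean-preserving reallocation among the children of $c'$, in particular by any function of $\theta_{c'}$ alone. That is exactly what your $M$-shift proves. It is also worth saying explicitly that a within-$c'$ dispersion statistic together with $w_c$ would bound the error, so the drift is ``silent'' precisely for an agent that has kept only the aggregate.
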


This is the formal content of an agent collapsing na\"ive CD4$^+$ $\alpha\beta$ T cells
into ``CD4$^+$ T cells''~\cite{medea}: the answer is not wrong at $c'$, it is
\emph{unanswered} at $c$, and no diagnostic available at $c'$ reveals the gap. Section
\ref{sec:drift} quantifies the cost in isolation.

\subsection{Agentic transport is path integration}

An agent asked for the claim at target $t$, anchored on established knowledge at $a$,
proceeds by chaining: it recalls the value at $a$, bridges to a neighbouring context,
bridges again, and arrives at $t$. Formally, for a path $\gamma$ from $a$ to $t$ in
$\Nrv$ with signed edges, the transported estimate is
\[
\hat\theta^{\gamma}_t \;=\; \theta_a \;+\; \sum_{e\in\gamma}\pm\,\omega_e
\;=\;\theta_a+\langle \omega,\gamma\rangle .
\]

\begin{theorem}[Path invariance and holonomy]\label{thm:path}
(i) $\hat\theta^{\gamma}_t$ is independent of $\gamma$ for every pair $(a,t)$ if and
only if $\omega\in\operatorname{im}\delta^0$.
(ii) For two paths $\gamma,\gamma'$ with common endpoints,
$\hat\theta^{\gamma}_t-\hat\theta^{\gamma'}_t=\langle\omega,\gamma-\gamma'\rangle$,
which depends only on the class $[\omega]\in H^1(\Nrv;\R)$ whenever
$\delta^1\omega=0$.
\end{theorem}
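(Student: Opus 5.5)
The plan is to treat paths as simplicial $1$-chains and use the pairing $\langle\omega,\gamma\rangle=\sum_{e\in\gamma}\pm\omega_e$ between $C^1$ and the chain group $C_1$. The one structural fact needed is the discrete Stokes identity $\langle\delta^0 x,\gamma\rangle=x_t-x_a$ for any edge path $\gamma$ from $a$ to $t$. It follows by telescoping, since each signed edge contributes $x_{\text{head}}-x_{\text{tail}}$. Equivalently, $\delta^0$ is adjoint to the boundary map $\partial_1$, and $\partial_1\gamma=t-a$.

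For (i), the backward direction is immediate from the identity. If $\omega=\delta^0\theta$, then $\hat\theta^\gamma_t=\theta_a+\theta_t-\theta_a$ for every $\gamma$. For the forward direction, I would work on each connected component of $\Nrv$ and fix a base vertex $a_0$ there. Define $x_v=\langle\omega,\gamma_v\rangle$ for any path $\gamma_v$ from $a_0$ to $v$; path independence makes this well defined. For an edge $ij$, take $\gamma_j=\gamma_i$ followed by $ij$. Independence then gives $x_j=x_i+\omega_{ij}$, so $\omega=\delta^0x\in\operatorname{im}\delta^0$. The only care needed is that independence is hypothesised for every pair $(a,t)$, which covers pairs $(a_0,v)$ in each component; vertices in different components impose no constraint.

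For (ii), the displayed equality is a direct subtraction, because the $\theta_a$ terms cancel and the pairing is linear in the chain. What remains is to show class dependence. The chain $z=\gamma-\gamma'$ satisfies $\partial_1 z=(t-a)-(t-a)=0$, so it is a $1$-cycle. Replacing $\omega$ by $\omega+\delta^0x$ changes the pairing by $\langle\delta^0x,z\rangle=\langle x,\partial_1 z\rangle=0$, so the value depends only on $\omega$ modulo $\operatorname{im}\delta^0$. Under $\delta^1\omega=0$, this quotient is the class $[\omega]\in H^1=\ker\delta^1/\operatorname{im}\delta^0$.

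The main obstacle is interpretive rather than technical. Modulo coboundaries alone the pairing is already well defined, so the role of the cocycle hypothesis must be stated carefully. I would add the observation that, for a cocycle, $\langle\omega,z\rangle$ also depends only on the homology class of $z$. If $z'=z+\partial_2\sigma$, then $\langle\omega,\partial_2\sigma\rangle=\langle\delta^1\omega,\sigma\rangle=0$. Checking this requires the sign convention in $(\delta^1 y)_{ijk}=y_{jk}-y_{ik}+y_{ij}$, which matches $\partial_2[ijk]=[jk]-[ik]+[ij]$. Hence the disagreement is the holonomy pairing $H^1\times H_1\to\R$, so it vanishes for all path pairs exactly when $[\omega]=0$ (over $\R$, by universal coefficients). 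This links back to (i).
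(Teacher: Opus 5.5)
Your proof is correct. Part (ii) is the paper's argument: $z=\gamma-\gamma'$ is a $1$-cycle, and $\langle\delta^0x,z\rangle=\langle x,\partial z\rangle=0$.

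Part (i) takes a genuinely different route. The paper reduces path-independence to $\omega\perp Z_1$ and then invokes the real inner product: $(\operatorname{im}\delta^0)^{\perp}=\ker(\delta^0)^{\top}=Z_1$, so $\omega\perp Z_1\iff\omega\in\operatorname{im}\delta^0$. You instead build the potential directly, setting $x_v=\langle\omega,\gamma_v\rangle$ from a base vertex in each component, and read off $\omega=\delta^0x$ edge by edge.

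Your route has several advantages:
\begin{itemize}
\item It is elementary and needs no inner product, so it holds verbatim over any abelian coefficient group. This matters for the paper's stated limitation on non-real claims.
\item It treats connected components explicitly.
\item It exhibits the global claim assignment as the chained estimate itself.
\item It avoids a step the paper leaves tacit: that differences of paths with common endpoints span $Z_1$. This is true, for instance via fundamental cycles of a spanning forest, but the paper needs it for its claim that path-independence is ``equivalent to $\langle\omega,z\rangle=0$ for every cycle''.
\end{itemize}
The paper's route is shorter given the linear algebra. More importantly, it casts the criterion as the orthogonal splitting $C^1=\operatorname{im}\delta^0\oplus Z_1$, which Theorem~\ref{thm:triage} and the coboundary-projection estimator immediately reuse.

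Your further observation goes beyond the paper and is correct. For a cocycle, the pairing also descends to $H_1$, because $\langle\omega,\partial_2\sigma\rangle=\langle\delta^1\omega,\sigma\rangle$ and the sign conventions match. This is what actually makes the disagreement a pairing $H^1\times H_1\to\R$. You are also right that the cocycle hypothesis in (ii) serves only to place $[\omega]$ in $H^1$; invariance under coboundaries holds without it.

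One small point of care. If ``path'' means a simple path, then $\gamma_i$ followed by $ij$ may revisit $j$. In that case, take the prefix of $\gamma_i$ ending at $j$ and extend it by $ji$. That path is simple and gives the same relation $x_j-x_i=\omega_{ij}$.
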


\begin{proof}
$z=\gamma-\gamma'$ is a $1$-cycle. For any $x\in C^0$,
$\langle\delta^0x,z\rangle=\langle x,\partial z\rangle=0$, so exact cochains pair
trivially with cycles; hence the difference depends on $\omega$ only modulo
$\operatorname{im}\delta^0$, giving (ii). For (i), path-independence for all pairs is
equivalent to $\langle\omega,z\rangle=0$ for every cycle $z$, i.e.\ $\omega\perp Z_1$.
Over $\R$, $C^1=\operatorname{im}\delta^0\oplus (\operatorname{im}\delta^0)^{\perp}$
and $(\operatorname{im}\delta^0)^{\perp}=Z_1$ under the standard inner product, so
$\omega\perp Z_1\iff\omega\in\operatorname{im}\delta^0$.
\end{proof}

\begin{remark}
Theorem~\ref{thm:path}(ii) explains why panel-based agents disagree. Distinct
panellists instantiate distinct reasoning paths; their spread is holonomy, not noise.
Averaging their verdicts reduces variance but does not remove the bias unless the
average happens to project onto $\operatorname{im}\delta^0$, which unweighted voting
does not.
\end{remark}

\subsection{Hodge triage and the blindness of local verification}\label{sec:triage}

\begin{theorem}[Triage]\label{thm:triage}
$C^1=\operatorname{im}\delta^0\ \oplus\ \mathcal{H}^1\ \oplus\
\operatorname{im}(\delta^1)^{\!\top}$ orthogonally, where
$\mathcal{H}^1=\ker\delta^1\cap\ker(\delta^0)^{\!\top}\cong H^1(\Nrv;\R)$. Writing
$\omega=g+h+c$:
\begin{enumerate}\itemsep1pt
\item[(a)] $g=\delta^0\beta$ is exactly the contamination produced by per-context
instrument offsets $\beta$ and is removed by recalibration against any single
anchored context.
\item[(b)] $c\in\operatorname{im}(\delta^1)^{\!\top}$ is the unique component with
$\delta^1 c\neq0$; it is precisely what a triple-overlap coherence check detects.
\item[(c)] $h$ satisfies $\delta^1 h=0$, so it passes every triple-overlap check, yet
$h\notin\operatorname{im}\delta^0$, so by Theorem~\ref{thm:path} it produces non-zero
disagreement between some pair of reasoning paths.
\end{enumerate}
\end{theorem}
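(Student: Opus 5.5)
The decomposition itself is standard finite-dimensional linear algebra, and I would prove it first, using only $\delta^1\delta^0=0$. Equip $C^0,C^1,C^2$ with the standard inner products, so adjoints are transposes. The first step is to show that $\operatorname{im}\delta^0\perp\operatorname{im}(\delta^1)^{\top}$: for any $x$ and $z$,
\[
\langle\delta^0x,(\delta^1)^{\top}z\rangle=\langle\delta^1\delta^0x,z\rangle=0 .
\]
Next, the orthogonal complement of $\operatorname{im}\delta^0\oplus\operatorname{im}(\delta^1)^{\top}$ is $(\operatorname{im}\delta^0)^{\perp}\cap(\operatorname{im}(\delta^1)^{\top})^{\perp}$. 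Since $(\operatorname{im}A)^{\perp}=\ker A^{\top}$, this equals $\ker(\delta^0)^{\top}\cap\ker\delta^1=\mathcal{H}^1$, and the three-way orthogonal sum follows. For the isomorphism with $H^1$, I would note that $\ker\delta^1=\operatorname{im}\delta^0\oplus\mathcal{H}^1$. This holds because $\ker\delta^1=(\operatorname{im}(\delta^1)^{\top})^{\perp}$, and that complement is exactly the sum of the other two summands. Hence the map $\mathcal{H}^1\to\ker\delta^1/\operatorname{im}\delta^0=H^1(\Nrv;\R)$, $h\mapsto[h]$, is a linear bijection. Here $H^1$ is the simplicial cohomology of the nerve, which for a finite cover is the \v{C}ech group by definition of $\Nrv$.

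For (a), I would take the model in which context $i$ reports $\theta_i+\beta_i$. Each bridged contrast is then contaminated by $\beta_j-\beta_i=(\delta^0\beta)_{ij}$, and conversely every element of $\operatorname{im}\delta^0$ has this form. Fixing $\beta_a$ at one anchored context determines $\beta$ on its connected component up to the kernel of $\delta^0$, which is the locally constant functions. Subtracting the least-squares fit $\delta^0\hat\beta$ removes $g$ exactly, because $g$ is the orthogonal projection of $\omega$ onto $\operatorname{im}\delta^0$.

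For (b) and (c), the decomposition gives $\delta^1\omega=\delta^1 c$, since $\delta^1 g=\delta^1\delta^0\beta=0$ and $\delta^1 h=0$. A triple-overlap check, which computes $(\delta^1\omega)_{ijk}$, therefore sees precisely $\delta^1 c$. Moreover, $\delta^1$ is injective on $\operatorname{im}(\delta^1)^{\top}$, because $\ker\delta^1\cap\operatorname{im}(\delta^1)^{\top}=0$ by the orthogonality above. So $\delta^1 c\neq0$ whenever $c\neq0$, and $c$ is the only component the check responds to. I would phrase ``unique'' as this statement: among the three summands, only the nonzero elements of the third have nonzero coboundary. For (c), I would assume $h\neq0$. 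Then $h\notin\operatorname{im}\delta^0$ by orthogonality, and $g+h$ is a cocycle that is not exact. By Theorem~\ref{thm:path}(i), some pair of paths disagrees. More concretely, $\langle h,z\rangle\neq0$ for some cycle $z$, because $h\in\ker(\delta^0)^{\top}=Z_1$ and $\langle h,h\rangle>0$. The path disagreement attributable to $h$ is therefore nonzero.

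The main obstacle is bookkeeping rather than depth. The statement of (c) must be read with $h\neq0$. In (a), ``removed by recalibration against any single anchored context'' needs the nerve to be connected, or else one anchor per connected component. The phrase ``unique component'' in (b) also needs the injectivity argument to be meaningful. I would state these qualifications explicitly as the first lines of the proof.
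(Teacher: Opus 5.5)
Your proposal is correct and follows essentially the paper's route. Both arguments get orthogonality of $\operatorname{im}\delta^0$ and $\operatorname{im}(\delta^1)^{\top}$ from $\delta^1\delta^0=0$, identify the remaining summand as $\ker(\delta^0)^{\top}\cap\ker\delta^1$, obtain (b) because $\delta^1$ kills the other two summands, and obtain (c) from $h\perp\operatorname{im}\delta^0$ together with Theorem~\ref{thm:path}(i). You go slightly beyond the paper in two ways: you prove $\mathcal{H}^1\cong H^1$ directly via $\ker\delta^1=\operatorname{im}\delta^0\oplus\mathcal{H}^1$ instead of citing the discrete Hodge theorem, and you state the caveats $h\neq0$, $c\neq0$, and one anchor per connected component, which the paper's terse proof leaves implicit.
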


\begin{proof}
Orthogonality of $\operatorname{im}\delta^0$ and $\operatorname{im}(\delta^1)^{\top}$
follows from $\delta^1\delta^0=0$: $\langle\delta^0x,(\delta^1)^{\top}z\rangle=
\langle\delta^1\delta^0x,z\rangle=0$. The remaining summand is
$\ker(\delta^0)^{\top}\cap\ker\delta^1$, which is the space of harmonic $1$-cochains,
isomorphic to $H^1$ by the discrete Hodge theorem. (a) is immediate. (b) holds because
$\delta^1$ annihilates the other two summands. (c) combines $h\in\ker\delta^1$ with
$h\perp\operatorname{im}\delta^0$ and Theorem~\ref{thm:path}(i).
\end{proof}

\begin{definition}[Simplex-supported consistency check]\label{def:check}
A statistic $T$ is a \emph{consistency check on the simplex $\sigma$} if (i) $T$ is a
function of $\omega$ only through its restriction to the star of $\sigma$, and (ii)
$T(\omega)=0$ whenever that restriction is exact. Context representability, parameter
compatibility, pairwise bridging plausibility, triple-overlap coherence and
single-execution provenance audits are all of this form.
\end{definition}

\begin{corollary}[Local verification is incomplete]\label{cor:blind}
No family of simplex-supported consistency checks can distinguish $\omega$ from
$\omega+h$ for $h\in\mathcal{H}^1$. Detecting $H^1$ obstruction requires a statistic
supported on a cycle basis of $\Nrv$, i.e.\ a genuinely global computation.
\end{corollary}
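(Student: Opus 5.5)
The plan is to reduce the corollary to one topological fact: the closed star of every simplex of $\Nrv$ is contractible. Combined with $\delta^1 h=0$, this will show that a harmonic cochain looks exact from inside any single star. Fix a simplex $\sigma$ and let $\mathrm{St}(\sigma)$ be its closed star, the subcomplex of all simplices containing $\sigma$ together with their faces.

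\emph{Step one: the star has no $H^1$.} $\mathrm{St}(\sigma)$ is a cone with apex any vertex $v$ of $\sigma$: every simplex $\tau$ of the star lies in a simplex $\rho\supseteq\sigma\ni v$, so $\tau\cup\{v\}\subseteq\rho$ is again a simplex. This holds inside the $2$-skeleton we work with, because a triangle containing $\sigma$ already contains $v$. A cone is contractible, so $H^1(\mathrm{St}(\sigma);\R)=0$. Concretely, for a cocycle $y$ on the star we can define $x_v=0$ and $x_u=y_{vu}$ for each vertex $u$ of the star. The cocycle condition on the triangles $vuw$, which exist by the cone property, then gives $y_{uw}=x_w-x_u$, so $y=\delta^0 x$.

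\emph{Step two: harmonic cochains are locally exact.} By Theorem~\ref{thm:triage}, $h\in\mathcal{H}^1$ satisfies $\delta^1 h=0$. Restriction commutes with $\delta^1$, so $h|_{\mathrm{St}(\sigma)}$ is a cocycle on the star, hence exact by step one. Thus $h$ is locally a gauge transformation on every star, even though it is not globally in $\operatorname{im}\delta^0$.

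\emph{Step three, the main obstacle: from ``$T=0$ on exact inputs'' to ``$T$ cannot tell $\omega$ from $\omega+h$.''} Definition~\ref{def:check}(ii) literally gives only $T(\omega+h)=T(\omega)$ when $\omega|_{\mathrm{St}(\sigma)}$ is itself exact, i.e.\ when $T$ takes the value $0$. That already yields the null-hypothesis version of the corollary: a coherent $\omega$ and its perturbation $\omega+h$ both pass every check. To get full indistinguishability, I would make explicit the reading intended by the examples: a consistency check is gauge-invariant, meaning it depends on $\omega|_{\mathrm{St}(\sigma)}$ only modulo $\delta^0 C^0(\mathrm{St}(\sigma))$. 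Linear checks satisfying (ii) are the model case, since then $T(\omega+h)=T(\omega)+T(h)=T(\omega)$. Such a check then satisfies $T(\omega+h)=T(\omega)$ by step two. Because this holds simplex by simplex, any family of checks, or any function of such a family, also fails to separate $\omega$ from $\omega+h$. I would state this interpretive assumption openly, since without it the corollary can fail for contrived nonlinear $T$.

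\emph{Step four: the global converse.} For the second sentence, I would use the perfect pairing $H^1(\Nrv;\R)\times H_1(\Nrv;\R)\to\R$, which holds over a field. A nonzero harmonic $h$ pairs nontrivially with some $1$-cycle $z$; this is the content of Theorem~\ref{thm:path}(i) and Theorem~\ref{thm:triage}(c). Writing $z$ in a cycle basis, some basis cycle must detect it. Conversely, any statistic that detects every nonzero $h$ must vanish on no nonzero class of $H^1$, so it must see each basis cycle. A basis cycle of a nontrivial class cannot lie inside a single star, since $H_1(\mathrm{St}(\sigma))=0$. Hence a detecting statistic must be supported on a set of simplices that carries a full cycle basis, which is the sense in which the computation is genuinely global.
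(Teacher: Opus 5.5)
Your proof is correct, and it is more complete than the paper's. The paper gives no written argument: it presents the corollary as immediate from Theorem~\ref{thm:triage}(c) (``$\delta^1 h=0$, so it passes every triple-overlap check'') together with condition (ii) of Definition~\ref{def:check}. You differ in two substantive ways.

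First, you prove the lemma the paper silently relies on: every closed star is a cone, so $H^1(\mathrm{St}(\sigma);\R)=0$. This is what turns $\delta^1 h=0$ into exactness of $h$ on an arbitrary star, for instance a vertex star, and not just on a single triangle. Your use of the closed star matters. The full subcomplex on the star's vertices can contain an edge $uw$ whose triangle $vuw$ is unfilled. That is precisely the executability-aware FX situation, and a check on that larger set would see the loop.

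Second, you correctly note that (ii) only gives $T(\omega)=T(\omega+h)$ when $\omega$ is locally exact. Take an edge $e$ in a star that contains a triangle. The check $T(\omega)=\|\delta^1(\omega|_{\mathrm{St}(\sigma)})\|\,\omega_e^2$ satisfies (i) and (ii), yet $T(\omega+h)\neq T(\omega)$ as soon as $\omega$ has curl on the star, $\omega_e=0$ and $h_e\neq0$. So the corollary as literally stated needs your gauge-invariance (or linearity) hypothesis. The paper's route buys brevity; yours buys a statement that is actually true, at the price of an explicit extra assumption.

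For the second sentence, the single-star observation does not by itself exclude a statistic supported on a union of stars. The clean version goes through duality. A gauge-invariant statistic supported on a subcomplex $S$ detects every nonzero class iff the restriction $H^1(\Nrv;\R)\to H^1(S;\R)$ is injective. Over $\R$ this is dual to $H_1(S)\to H_1(\Nrv)$ being surjective, i.e.\ $S$ must carry cycles spanning $H_1(\Nrv)$.
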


\begin{remark}[What is and is not being claimed]
Condition (ii) is doing the work and it should be stated plainly. A check that examines
the \emph{magnitude} of $\omega$ on three edges is not blind to $h$, since $h$ is
non-zero edge by edge; such a check would however flag legitimate large effects with
equal enthusiasm. It is consistency checking specifically---the thing that makes
verification cheap and specific---that cannot see the harmonic component. We also note
that once the setup is fixed the algebra is immediate. The contribution here is the
identification of agentic verification with simplex-supported consistency, not the
cohomology, which is standard.
\end{remark}

Corollary~\ref{cor:blind} is the paper's central claim. It says that the verification
strategy embodied in current context-preserving agents, however carefully executed, has
a blind spot that is not a matter of implementation quality.

\begin{figure}[t]\centering
\includegraphics[width=\textwidth]{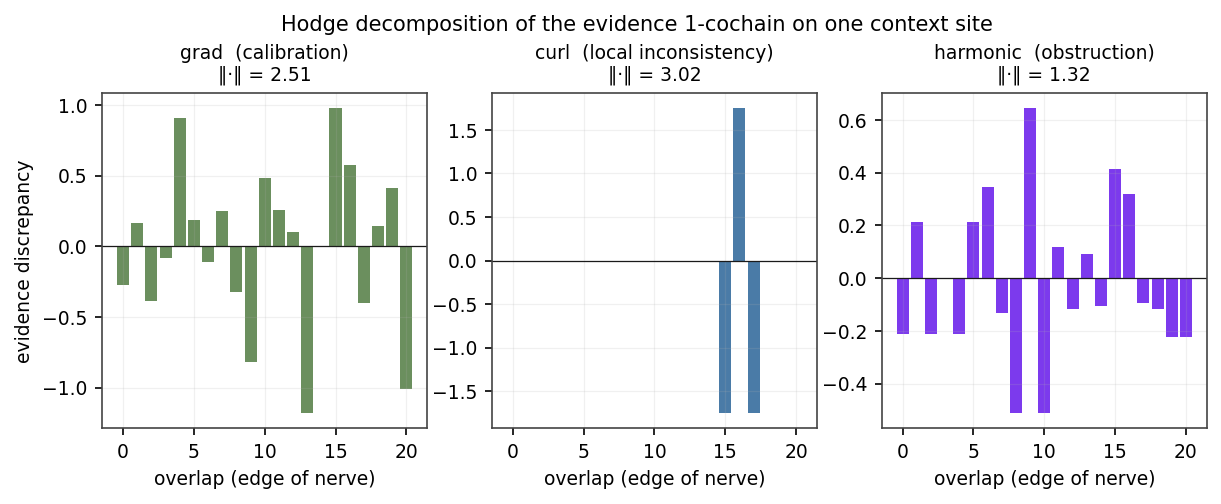}
\caption{Hodge decomposition of one evidence 1-cochain. The three components are orthogonal and carry distinct operational meanings; only the third is invisible to every local verification check.}
\end{figure}

\subsection{Abstention as a cohomological quantity}

Non-zero $[\omega]$ has a substantive interpretation: the covering is \emph{too
coarse}. If a coherent global claim existed at this granularity, evidence would be
exact up to noise. Persistent holonomy certifies that the nominal contexts still
contain latent strata whose claims differ. We therefore adopt:

\begin{proposition}[Error floor]\label{prop:floor}
Suppose (i) the estimator is the coboundary projection
$\hat\theta=\arg\min_\theta\|\delta^0\theta-\omega\|_2$, and (ii) the residual
within-context heterogeneity unlocked by a coarse covering has scale increasing in
$\|h\|$. Then the risk of any decision taken at the nominal granularity is bounded
below by a monotone function of $\|h\|$, and is asymptotically insensitive to $\|g\|$
and $\|c\|$, which the estimator projects out.
\end{proposition}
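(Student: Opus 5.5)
The plan is to analyse the coboundary-projection estimator through the Hodge decomposition of Theorem~\ref{thm:triage}, and then convert the leftover harmonic residual into a lower bound on decision risk using hypothesis (ii).

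\textbf{Step 1: what the estimator keeps.} Write $\omega=g+h+c$ with $g=\delta^0\beta$. The least-squares problem $\min_\theta\|\delta^0\theta-\omega\|_2$ is solved by the orthogonal projection of $\omega$ onto $\operatorname{im}\delta^0$. By the orthogonality in Theorem~\ref{thm:triage}, this projection equals $g$ exactly, so $\delta^0\hat\theta=g$ and $\hat\theta=\beta$ up to the kernel of $\delta^0$ (a constant per connected component, fixed by anchoring). The residual is $\omega-\delta^0\hat\theta=h+c$, and its squared norm is $\|h\|^2+\|c\|^2$.

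\textbf{Step 2: insensitivity to $\|g\|$ and $\|c\|$.} Two facts follow from Step 1.
\begin{itemize}
\item $g$ only shifts $\hat\theta$ by the calibration offset. Once the anchor is fixed, re-expressing claims relative to the anchored context removes it, as in Theorem~\ref{thm:triage}(a).
\item $c$ does not enter $\hat\theta$ at all.
\end{itemize}
In the noisy setting, $\omega=\omega^\ast+\varepsilon$, so the projected noise contributes a variance term that vanishes as the per-edge sample sizes grow. This is where ``asymptotically'' enters: risk contributions from $\|g\|$ and $\|c\|$ are $o(1)$ in the sample size. The dependence on $\|h\|$ is different, because $h$ is exactly the part of the signal that no $\theta\in C^0$ can reproduce.

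\textbf{Step 3: the floor.}
\begin{itemize}
\item By Theorem~\ref{thm:path}(ii), $h\neq0$ yields a cycle $z$ with $\langle h,z\rangle\neq0$. Any single assignment $\hat\theta$ therefore disagrees, along some reasoning path, with the transported value by at least $|\langle h,z\rangle|/\mathrm{len}(z)$ on some edge.
\item Interpreting this through the latent strata, the true context-level claims vary within the nominal contexts. Any $\hat\theta$ measurable at the nominal granularity incurs error at least the within-context variation, by the same argument as Proposition~\ref{prop:drift}: the error equals the within-$c'$ variation, which no nominal-level quantity controls.
\item Hypothesis (ii) says this variation has scale $s(\|h\|)$ with $s$ increasing. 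For squared loss, the risk of any decision rule measurable with respect to the coarse covering is then at least the within-stratum variance, which is $\ge \phi(s(\|h\|))$ with $\phi$ monotone.
\end{itemize}

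\textbf{Main obstacle.} The hardest step is Step 3: turning the purely algebraic residual $h$ into a statistical risk bound. Nothing in the cochain algebra itself forces risk to grow with $\|h\|$; the link comes entirely from assumption (ii). I would therefore make (ii) concrete as a lower bound $\operatorname{Var}(\theta\mid U_i)\ge s(\|h\|)$ and apply the elementary fact that the Bayes risk under squared loss equals the expected conditional variance. The proposition's ``monotone function'' is then just $s$, possibly composed with the loss.
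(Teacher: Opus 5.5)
\textbf{Verdict: the floor argument is sound, but the $\|g\|$-insensitivity step fails as written, and the $\|c\|$ part needs an extra hypothesis.}

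The paper states this proposition without proof. It treats (ii) as a modelling assumption to be tested in \S\ref{sec:claims}, and your skeleton is the reasoning it relies on: the fit returns $P_{\operatorname{im}\delta^0}\omega=g$, the residual is $h+c$, and the floor comes entirely from (ii). Your formalisation of the floor is sound and does not even need (i): $\operatorname{Var}(\theta\mid U_i)\ge s(\|h\|)$, plus the fact that any rule measurable at the nominal granularity has squared-loss risk at least the expected conditional variance.

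\textbf{The gap: insensitivity to $\|g\|$ in Step~2.} By your own Step~1, $\hat\theta=\beta$ modulo $\ker\delta^0$, so the estimator does not project $g$ out; its whole output \emph{is} $g$. Fixing an anchor pins only the one constant per connected component that $\ker\delta^0$ leaves free. It leaves $\hat\theta_i-\hat\theta_a=\beta_i-\beta_a$ intact, so ``re-expressing relative to the anchor removes it'' is false, and Theorem~\ref{thm:triage}(a) cannot supply this step.

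What does hold is equivariance, $\hat\theta(\omega+\delta^0x)=\hat\theta(\omega)+x$. If the estimand satisfies $g=\delta^0\theta^\ast$, then $\hat\theta-\theta^\ast=(\delta^0)^{+}\varepsilon$ up to the anchoring constant. This is independent of $g$, $c$ and $h$ at every sample size, so your appeal to vanishing noise to explain ``asymptotically'' is beside the point. If instead $g=\delta^0(\theta^\ast+\beta)$ with unknown per-context offsets, then $\theta^\ast$ and $\beta$ are not separately identified from $\omega$. The error then acquires the term $\beta-\beta_a$, and risk grows with $\|\delta^0\beta\|$. In that case insensitivity needs external calibration data as an explicit extra hypothesis.

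\textbf{Insensitivity to $\|c\|$.} That $c$ does not enter $\hat\theta$ makes the \emph{estimator} $c$-free, not the risk, because the outcome distribution matters too. Hypothesis (ii) only says the within-context heterogeneity increases in $\|h\|$, not that it is independent of $c$. You must strengthen (ii) to ``the heterogeneity depends on the configuration only through $h$'', or read insensitivity ceteris paribus. This is exactly what fails in the paper's own mechanistic model of \S\ref{sec:mech}, where curl energy predicts projection error almost as well as harmonic energy.

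\textbf{First bullet of Step~3.} This is wrong as stated. For a simple cycle $z$, the residual's holonomy is $\langle h,z\rangle+\langle c,z\rangle$, and $\langle c,z\rangle$ need not vanish; take $z$ through one edge of a filled triangle carrying curl. So it can cancel $\langle h,z\rangle$. Taking $z=h$ itself gives the correct bound $\max_e|(\omega-\delta^0\theta)_e|\ge\|h\|_2^2/\|h\|_1$. But either version bounds a fitting residual, not decision risk, so the bullet can simply be dropped.
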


Assumption (ii) is substantive and is the object of our empirical test in
\S\ref{sec:claims}: it predicts that harmonic energy, and \emph{only} harmonic energy,
should correlate with the irreducible error of the optimal estimator.

\begin{proposition}[Refinement is the remedy]\label{prop:refine}
Let $\mathcal{V}$ refine $\Uc$. Classes in $H^1(N(\Uc))$ that arise from unresolved
within-context heterogeneity are filled in $N(\mathcal{V})$; in the colimit over
refinements the \v{C}ech obstruction to gluing vanishes for a sheaf. Operationally: a
non-zero class is an instruction to \emph{stratify further and collect there}, not an
instruction to gather more evidence at the same granularity.
\end{proposition}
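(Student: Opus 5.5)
The plan is to make the informal statement precise in two layers: first the combinatorial fact about nerves, then the sheaf-theoretic colimit statement. We fix a refinement $\mathcal{V}=\{V_\alpha\}$ of $\Uc$ with a refinement map $\tau:\alpha\mapsto i$ such that $V_\alpha\subseteq U_{\tau(\alpha)}$. This $\tau$ induces a simplicial map $N(\mathcal{V})\to N(\Uc)$, hence a pullback $\tau^*:C^\bullet(N(\Uc))\to C^\bullet(N(\mathcal{V}))$ commuting with $\delta^0,\delta^1$. The induced map on $H^1$ is independent of the choice of $\tau$, by the standard chain-homotopy argument for \v{C}ech refinement maps. We then say precisely what ``arising from unresolved within-context heterogeneity'' means.

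The model we adopt is that there is a finer assignment $\vartheta\in C^0(N(\mathcal{V}))$, the true stratum-level claims, such that the refined evidence is coherent:
\[
\omega^{\mathcal{V}}=\delta^0\vartheta .
\]
We also assume the coarse evidence is an aggregate of it, $\omega=\pi(\omega^{\mathcal{V}})$, where $\pi$ is the population-weighting of stratum contrasts on each overlap $U_i\cap U_j$. With this model, the claim ``the class is filled in $N(\mathcal{V})$'' becomes the statement that the evidence measured at the fine granularity is exact. By Theorem~\ref{thm:path}(i) that class is zero, and every reasoning path in $N(\mathcal{V})$ returns the same answer.

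The second step is to show that the coarse class $[\omega]$ can indeed be nonzero. The reason is that $\pi$ is not a cochain map when overlap-specific composition weights differ; this is the Simpson mechanism. Concretely, $\omega_{ij}=\sum_{\alpha,\beta}w^{ij}_{\alpha\beta}(\vartheta_\beta-\vartheta_\alpha)$ with weights depending on the overlap $ij$. Around a cycle $z$, $\langle\omega,z\rangle$ is then a weighted sum of within-context variations of $\vartheta$, and this sum vanishes when those variations vanish, which links the construction back to Proposition~\ref{prop:drift}. So the harmonic part is entirely attributable to within-context heterogeneity, and it is killed by passing to $\mathcal{V}$, where it becomes the coboundary $\delta^0\vartheta$.

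For the colimit clause we invoke the standard result that for a sheaf $F$ the canonical map
\[
\varinjlim_{\mathcal{V}}\check H^1(\mathcal{V};F)\to\check H^1(X;F)
\]
injects into the obstruction-to-gluing group. The obstruction to gluing local sections, which is the torsor or non-exactness class, dies under refinement, because a sheaf's local sections agree on sufficiently fine overlaps and so glue. Whenever the global gluing problem is solvable, the class vanishes in the colimit. The operational sentence then follows from the asymmetry between the two remedies. Collecting more evidence at the same granularity changes only the noise around $\omega$, and leaves $[\omega]\in H^1(N(\Uc))$ fixed, since $h$ is a population quantity rather than a sampling artefact. Refinement, by contrast, changes the complex itself.

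The main obstacle is the word ``unique remedy'' together with the vagueness of ``arise from.'' The argument is essentially definitional unless the heterogeneity model is fixed. I would therefore state the result conditionally on the existence of a coherent fine assignment $\vartheta$, and prove uniqueness only in the weak sense that operations preserving the covering cannot change $[\omega]$. Those operations are recalibration, which changes $g$, remeasurement, which changes $c$, and resampling. Classes of genuinely topological origin, such as the business-cycle circle, need not die under refinement, and I would flag that they are excluded by the hypothesis.
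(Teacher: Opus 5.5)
The paper gives no proof of this proposition. Its only support is Proposition~\ref{prop:mech}, and your Simpson computation is essentially that argument: holonomy around a cycle is a composition-weighted sum of within-context deviations of $\vartheta$, hence zero when they vanish.

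The genuine gap is the colimit clause, which your argument does not establish. The map in your display is the identity, because $\check H^1(X;F)$ is \emph{defined} as $\varinjlim_{\mathcal V}\check H^1(\mathcal V;F)$. Your justification, ``a sheaf's local sections agree on sufficiently fine overlaps and so glue'', is the gluing axiom, which is a statement about $H^0$. A $1$-cocycle records that local data \emph{disagree}. The gluing axiom in fact proves the opposite of what you need: for a sheaf $F$ every refinement map $\check H^1(\Uc;F)\to\check H^1(\mathcal V;F)$ is injective. So a non-zero class never dies, and the colimit is $H^1(X;F)$, which equals $\R$ for the constant sheaf on a circle such as the paper's regime axis. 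Read literally, the clause is false. Your last paragraph concedes this, so it contradicts your colimit paragraph.

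Your heterogeneity hypothesis does not rescue the clause either. Take $X=S^1\times S$ with $S$ a finite set of strata, and $U_i=A_i\times S$ for three arcs $A_i$ covering $S^1$ with no triple overlap. Use stratum-specific claims $\vartheta_{i,s}$ and overlap-specific weights $w_{ij,s}$. Then $\omega^{\mathcal V}=\delta^0\vartheta$ holds on $\mathcal V=\{A_i\times\{s\}\}$. Yet the image of $[\omega]$ in $H^1(X;\underline{\R})\cong\R^{S}$ has every coordinate equal to the holonomy $\langle\omega,z\rangle$. So whenever there is an obstruction at all, it survives every refinement, even though refinement still remedies the problem operationally because the fine evidence is exact.

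What is true is the following. $H^1(\Nrv;\R)$ is \v{C}ech cohomology of $\Uc$ with coefficients in the constant \emph{presheaf}, one scalar per context. Classes can die under refinement only because that presheaf is not a sheaf: a single scalar cannot be glued from stratum-wise values on a heterogeneous context. For paracompact $X$ the colimit is $H^1(X;\underline{\R})$. The clause therefore holds for exactly those classes whose image there vanishes, which is all of them when $H^1(X;\underline{\R})=0$, as on the contractible pharma site. That condition has to be added as a separate topological hypothesis; it does not follow from your data hypothesis.

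Second, the $\tau^*$ set-up in your first paragraph is never used, and it could not be. Take three contexts forming a cycle $z$ with no triple overlap, $X$ a finite set (so $H^1(X;\underline{\R})=0$), and two strata present on every overlap. The stratification refinement $V_{i,s}=U_i\cap S_s$ then has as its nerve two disjoint lifts $z_s$ of $z$. Since $\langle\tau^*\omega,z_s\rangle=\langle\omega,z\rangle\neq0$, we get $\tau^*[\omega]\neq0$ in $H^1(N(\mathcal V))$. So the coarse class is not ``filled'' in the natural refinement in any topological sense.

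What your first layer actually proves is that a \emph{new} cochain, the re-collected fine evidence $\omega^{\mathcal V}$, is exact. The harmonic part $h$ does not ``become'' $\delta^0\vartheta$: $\tau^*\omega\neq\omega^{\mathcal V}$, and the map that does relate them, $\pi$, runs from fine to coarse and is not a cochain map. Your reading is the right one, since it is what ``stratify further and collect there'' means. But state it as the definition of ``filled'', and drop the suggestion that the coarse class is killed by passing to $\mathcal V$. The first layer is then true by hypothesis, as you acknowledge, and the Simpson computation is its only substantive content.
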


\begin{proposition}[Bridge to transportability, informal]\label{prop:transport}
If $\Nrv$ is generated by a selection diagram and each edge $ij$ is $S$-admissible in
the sense of \cite{pearl-transport}, then each $\omega_{ij}$ identifies
$\theta_j-\theta_i$ and $\omega$ is exact up to sampling error. Conversely a non-zero
class certifies that $S$-admissibility fails somewhere on the corresponding loop,
without identifying where. Cohomology thus provides a \emph{global detector} for a
condition the do-calculus checks \emph{locally}.
\end{proposition}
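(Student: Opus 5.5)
The statement has two halves, and I would prove them separately.

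\emph{Forward direction.} Fix an $S$-admissible edge $ij$. By the transport formula of \cite{pearl-transport}, the target quantity in $U_j$ is identified from the source distribution in $U_i$ together with observations on $U_i\cap U_j$. Hence the contrast estimated on the overlap has expectation $\theta_j-\theta_i$. Write
\[
\omega_{ij}=\theta_j-\theta_i+\varepsilon_{ij},
\]
where $\varepsilon$ is mean-zero sampling error. Then $\omega=\delta^0\theta+\varepsilon$, so $\omega$ is exact up to sampling error. Its population limit lies in $\operatorname{im}\delta^0$, and by Theorem~\ref{thm:triage} both $h$ and $c$ are $O_p$ of the noise. The only substantive modelling step is to make precise that ``generated by a selection diagram'' means there is a single global $\theta\in C^0$ whose restrictions are the context-level effects. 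I would state this as the assumption that all contexts are populations in one causal model, differing only through selection nodes. This is what makes $\theta$ a well-defined $0$-cochain rather than a collection of incompatible local values.

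\emph{Converse.} I would argue by contraposition, one cycle at a time. Suppose $[\omega]\neq0$ in population, i.e.\ $h\neq0$. Since $\mathcal{H}^1\cong H^1$ pairs nondegenerately with $Z_1$ modulo boundaries, there is a cycle $z$, with basis loops chosen from $\Nrv$, such that $\langle\omega,z\rangle\neq0$. If every edge in the support of $z$ were $S$-admissible, the forward direction applied edgewise would give $\langle\omega,z\rangle=\langle\delta^0\theta,z\rangle=0$, using $\partial z=0$ exactly as in the proof of Theorem~\ref{thm:path}. This is a contradiction, so some edge on $z$ fails admissibility. The same computation shows that the result cannot be localized: adding $\delta^0 x$ to $\omega$ leaves every loop sum unchanged, which reassigns bias among edges. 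Any single edge on $z$ can therefore be made to carry the entire failure. I would exhibit this explicitly on a triangle: take a bias $b$ on edge $ij$ versus $b$ on edge $jk$. Both give the same $\langle\omega,z\rangle=b$ and the same triple-overlap statistics. This gives the ``without identifying where'' clause.

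\emph{Main obstacle and closing remark.} The main obstacle is the phrase ``up to sampling error.'' The converse certifies failure only for the \emph{population} class, so I would state it at the level of expectations. The finite-sample version would then be deferred to the $F$-test developed later. I would also note that a non-admissible edge could in principle still produce a zero loop sum if its biases cancel, so the converse is genuinely one-directional. Because of these two caveats I would present the proof under the label ``informal.'' The final step is interpretive: admissibility is checked edge by edge, which is local, while $\langle\omega,z\rangle$ is supported on a cycle. By Corollary~\ref{cor:blind}, this makes it a global detector.
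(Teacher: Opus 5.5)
Your argument is correct and matches the reasoning the paper leaves implicit: it states Proposition~\ref{prop:transport} without proof, and the intended argument is Theorem~\ref{thm:path} applied loop by loop, as you do (admissible edges force $\omega=\delta^0\theta$ up to noise, so a loop with non-zero holonomy must contain a non-admissible edge, while the bias vector is determined only modulo $\operatorname{im}\delta^0$ and hence cannot be located). Two small tightenings: when $c\neq0$ the $H^1$--$H_1$ pairing only gives a loop with $\langle h,z\rangle\neq0$, which $\langle c,z\rangle$ could cancel, so take the loop with $\langle\omega,z\rangle\neq0$ from Theorem~\ref{thm:path}(i) (or from $h\in Z_1$ with $\langle\omega,h\rangle=\|h\|^2>0$, expanded in a loop basis); and run your localisation example on an \emph{unfilled} cycle, since a lone filled triangle has $H^1=0$ and there the example illustrates curl rather than a non-zero class.
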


\section{The \KS{} procedure}

\begin{enumerate}\itemsep2pt
\item \textbf{Site construction.} Elicit the attribute factorisation of context space;
build $\Nrv$ from the overlaps for which bridging evidence exists. Report $\dim H^1$.
\item \textbf{Evidence assembly.} Populate $\omega$ from tool outputs, bridging
studies and literature contrasts, one entry per usable overlap.
\item \textbf{Decomposition.} Compute $\omega=g+h+c$ by least squares against
$\delta^0$ and $(\delta^1)^{\top}$.
\item \textbf{Triage.} Dominant $g$ $\Rightarrow$ recalibrate instruments and proceed.
Dominant $c$ $\Rightarrow$ local inconsistency; re-measure or re-run the deliberation.
Dominant $h$ $\Rightarrow$ structural obstruction.
\item \textbf{Estimate and gate.} Report $\hat\theta$ by coboundary projection;
abstain iff $\|h\|/\sqrt{|E|}$ exceeds a threshold calibrated to the decision
tolerance. On abstention, emit the cycle carrying the largest holonomy as the
recommended stratification target.
\end{enumerate}

\section{Simulation study}

\subsection{Design}

We generate context sites as products of two ordinal attributes on a $4\times4$ grid,
with adjacent contexts sharing population (edges) and a minority admitting genuine
three-way overlap (triangles), then delete a random fraction of overlaps to represent
evidence gaps. Evidence is generated as
$\omega=\delta^0(\theta+\beta)+h+c+\varepsilon$ with $\beta$ per-context instrument
offsets, $h$ drawn in the harmonic subspace with scale $\eta$, $c$ drawn in
$\operatorname{im}(\delta^1)^{\top}$ with per-replication magnitude, and
$\varepsilon$ sampling noise. Consistent with Proposition~\ref{prop:floor}, the
realised outcome at the target carries additional dispersion proportional to $\eta$,
representing the latent stratum the coarse covering failed to separate.

\paragraph{Two instantiations.}
\emph{Pharma RWE} --- contexts are (CKD stage $\times$ care setting); the attribute
space is contractible, and $H^1$ arises only from unfilled squares and evidence
gaps. \emph{Consumer credit} --- contexts are (region $\times$ macroeconomic regime)
with the regime axis \emph{cyclic}: expansion $\to$ late cycle $\to$ contraction $\to$
recovery $\to$ expansion. Because the business cycle is a circle, this site carries
$\dim H^1\ge1$ by construction, independent of any data problem. Realised means:
$\dim H^1=5.84$ (pharma) and $9.75$ (credit) over $16$ contexts.

\paragraph{Agents.} \textbf{Pooled} ignores context. \textbf{Single-chain} follows one
plausible reasoning path and always commits. \textbf{Panel debate} samples three paths
and abstains on spread---our model of a \textsc{Medea}-style consensus module.
\textbf{Residual-gated} uses the coboundary-projection estimator and abstains on
\emph{total} residual conflict: the strongest baseline a careful engineer would build.
\textbf{\KS{}} uses the same estimator and abstains on harmonic energy alone.

$1{,}200$ replications per domain; thresholds swept to trace full risk--coverage
curves so that agents are compared at matched coverage rather than at hand-picked
operating points.

\begin{figure}[t]\centering
\includegraphics[width=\textwidth]{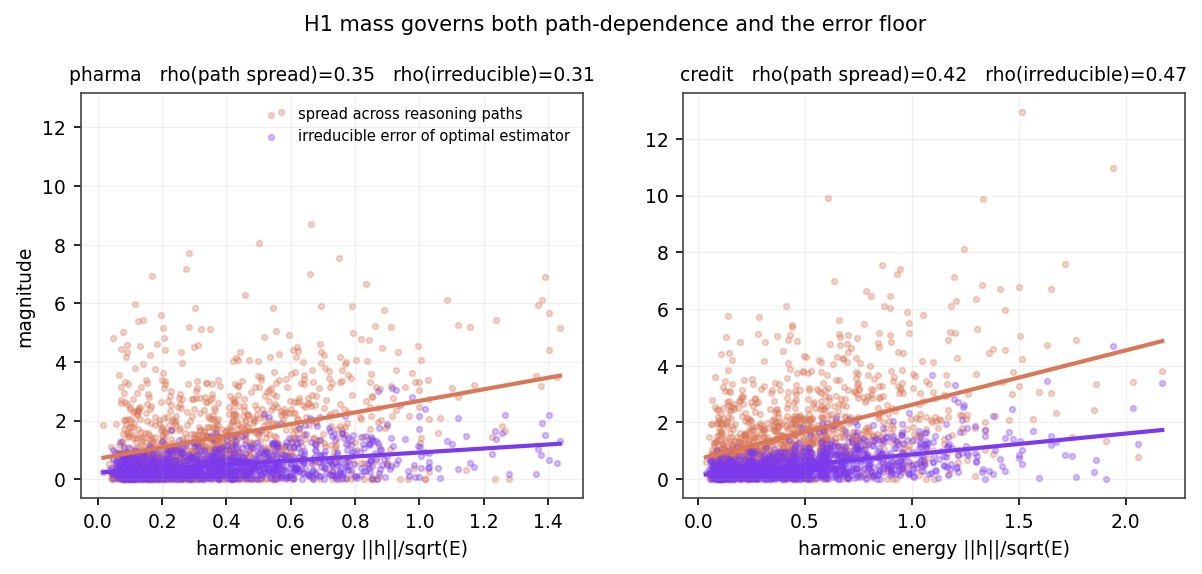}
\caption{Harmonic energy governs both the spread across valid reasoning paths (orange) and the error floor of the optimal path-invariant estimator (purple).}
\end{figure}

\subsection{Do the theoretical claims hold?}\label{sec:claims}

\begin{table}[h]\centering\small
\caption{Spearman correlations, $n=1{,}200$ per domain.}
\begin{tabular}{llrr}
\toprule
& & \multicolumn{2}{c}{$\rho$}\\
\cmidrule(l){3-4}
Claim & Predictor & pharma & credit\\
\midrule
A. Path-dependence of the conclusion & harmonic & $0.350^{***}$ & $0.423^{***}$\\
\quad(spread across reasoning paths) & curl & $0.393^{***}$ & $0.376^{***}$\\
 & gradient & $-0.035$ & $-0.008$\\[3pt]
B. Irreducible error of the optimal & \textbf{harmonic} & $\mathbf{0.311}^{***}$ & $\mathbf{0.471}^{***}$\\
\quad(coboundary-projection) estimator & curl & $0.032$ & $-0.003$\\
 & gradient & $0.040$ & $0.038$\\[3pt]
C. Quality of the abstention gate & total residual conflict & $0.188^{***}$ & $0.297^{***}$\\
 & \textbf{harmonic only} & $\mathbf{0.311}^{***}$ & $\mathbf{0.471}^{***}$\\
\bottomrule
\end{tabular}
\\[2pt]\footnotesize $^{***}p<10^{-4}$; all unmarked entries $p>0.16$.
\end{table}

Claim A confirms Theorem~\ref{thm:path}: both non-exact components generate
path-dependence, gradient does not. Claim B is the discriminating result. Only
harmonic energy predicts the error that survives optimal estimation; curl and gradient
are statistically indistinguishable from zero. Claim C follows: an agent that gates on
total conflict is using a signal diluted by two components that carry no information
about irreducible risk.

\subsection{Selective performance}

\begin{table}[h]\centering\small
\caption{Main results. AURC is area under the risk--coverage curve (lower is better);
harmful $=$ error beyond the decision tolerance. Agents without a conflict signal
cannot abstain and are shown at full coverage.}
\begin{tabular}{llrrrr}
\toprule
Domain & Agent & MAE (full) & Sign error & Harmful @75\% & AURC\\
\midrule
Pharma & Pooled (context-blind) & 0.827 & 0.495 & 0.640 & ---\\
 & Single-chain (tool-using LLM) & 1.816 & 0.345 & 0.722 & ---\\
 & Panel debate (\textsc{Medea}-style) & 1.182 & 0.306 & 0.597 & 0.813\\
 & Residual-gated (context-verified) & 0.486 & 0.184 & 0.358 & 0.402\\
 & \KS{} (cohomological) & \textbf{0.486} & \textbf{0.184} & \textbf{0.326} & \textbf{0.373}\\
\midrule
Credit & Pooled (context-blind) & 0.874 & 0.491 & 0.649 & ---\\
 & Single-chain (tool-using LLM) & 1.914 & 0.365 & 0.744 & ---\\
 & Panel debate (\textsc{Medea}-style) & 1.281 & 0.342 & 0.623 & 0.884\\
 & Residual-gated (context-verified) & 0.502 & 0.163 & 0.318 & 0.367\\
 & \KS{} (cohomological) & \textbf{0.502} & \textbf{0.163} & \textbf{0.276} & \textbf{0.329}\\
\bottomrule
\end{tabular}
\end{table}

Paired bootstrap over $2{,}000$ resamples, \KS{} minus the strongest baseline:
$\Delta\mathrm{AURC}=-0.029$ $[-0.045,-0.014]$, $p<0.001$ (pharma) and
$-0.039$ $[-0.052,-0.027]$, $p<0.001$ (credit); harmful rate at $75\%$ coverage
$\Delta=-0.032$ $[-0.052,-0.012]$, $p=0.003$ and $-0.043$ $[-0.060,-0.026]$,
$p<0.001$. The two agents share an estimator and differ only in the gating statistic,
so the entire gap is attributable to Theorem~\ref{thm:triage}.

\paragraph{Two further observations.}
First, the single-chain agent is \emph{worse} than the context-blind pooled agent
(MAE $1.82$ vs $0.83$). Chaining evidence compounds variance along the path; a longer,
more sophisticated reasoning trajectory over a noisy evidence graph is actively
harmful unless the trajectory is aggregated correctly. This is a mechanism for the
observed brittleness of long-horizon agents that does not require appeal to
hallucination.
Second, panel debate improves on single-chain but remains far behind projection-based
estimation, consistent with the remark following Theorem~\ref{thm:path}: sampling
paths estimates the holonomy but does not remove it.

\begin{figure}[t]\centering
\includegraphics[width=\textwidth]{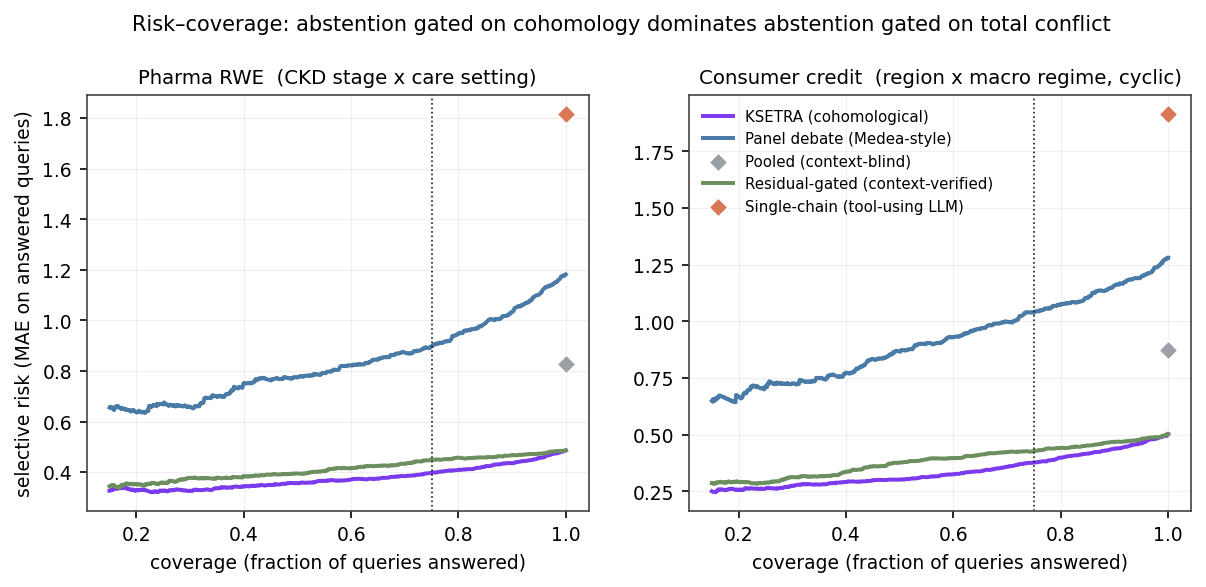}
\caption{Risk--coverage. The two right-hand curves share an estimator and differ only in the abstention statistic; the gap is the empirical content of Theorem~\ref{thm:triage}.}
\end{figure}

\begin{figure}[t]\centering
\includegraphics[width=\textwidth]{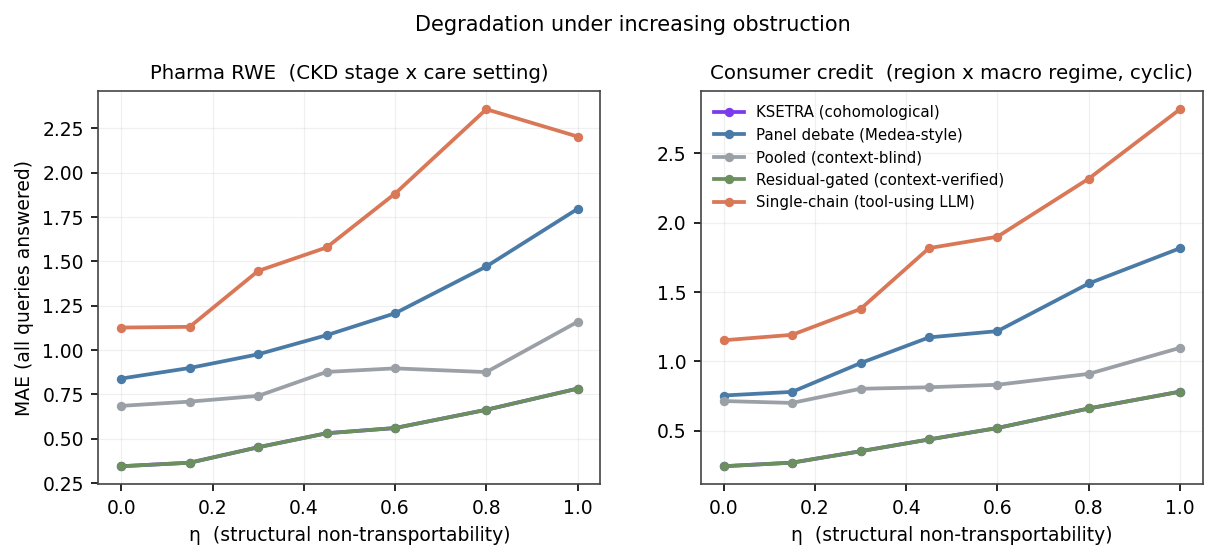}
\caption{Degradation as structural non-transportability $\eta$ increases, at full coverage. Projection-based estimation degrades gracefully; path-chaining does not.}
\end{figure}

\subsection{Conflict triage}

\KS{} classifies each query. Pharma: $41.0\%$ calibration offset (reconcile and
proceed), $34.4\%$ local inconsistency (re-measure), $24.6\%$ $H^1$ obstruction
(refine the covering). Credit: $39.4\%$ / $31.6\%$ / $29.0\%$. Roughly one query in
four is structurally unanswerable at the granularity posed---and in the cyclic credit
site the fraction is higher, as the topology predicts.

\begin{figure}[t]\centering
\includegraphics[width=\textwidth]{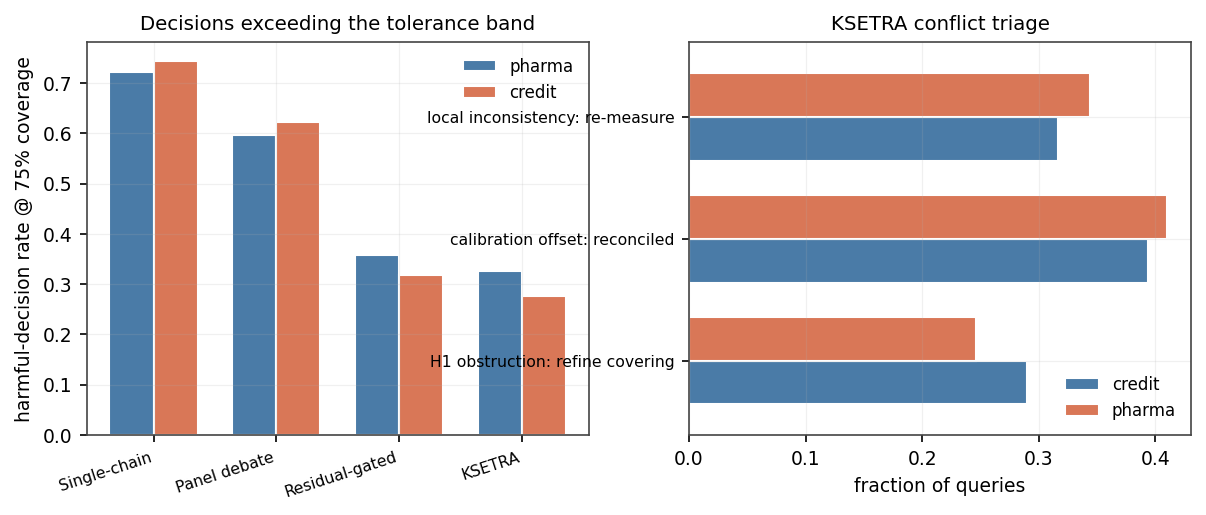}
\caption{Left: harmful commitments at matched 75\% coverage. Right: \KS{} conflict triage---recalibrate, re-measure, or refine the covering.}
\end{figure}

\begin{figure}[t]\centering
\includegraphics[width=0.48\textwidth]{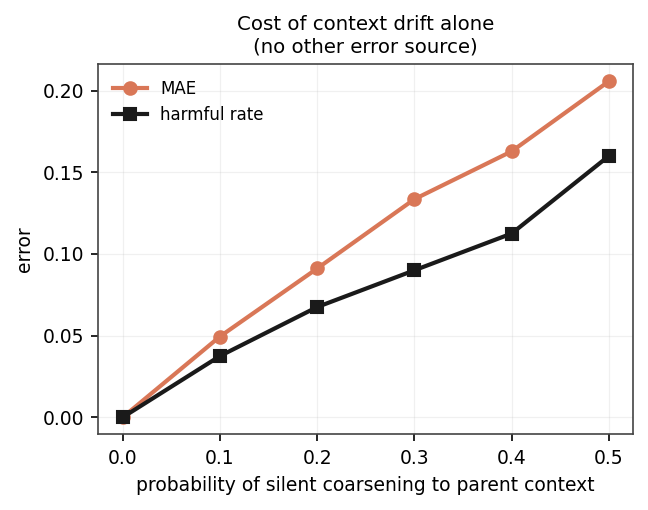}
\caption{Cost of silent coarsening in isolation, with every other error source removed.}
\end{figure}

\subsection{The cost of drift alone}\label{sec:drift}

Isolating Proposition~\ref{prop:drift} with all other error sources removed: silent
coarsening to the parent context on $30\%$ of queries yields MAE $0.134$ and a $9.0\%$
harmful rate; at $50\%$, $0.206$ and $16.0\%$. Drift is expensive even when every
other component of the system is perfect.

\section{Simulated real-world impact}

We translate selective-risk geometry into portfolio outcomes. \emph{All unit costs are
declared parameters, not findings}; the scientific content is the mapping and its
sensitivity. A deploying organisation must substitute audited figures.

\paragraph{Scenario A: target nomination.} $500$ context-specific nominations screened
per year; a false commitment consumes one preclinical validation campaign (cost $1$);
a correct nomination returns $1.6$; an abstention costs $0.15$ in expert routing. At
$75\%$ coverage: $134.2$ harmful commitments under residual gating versus $122.1$
under \KS{}---$12$ campaigns per year redirected, a net gain of $31.4$ units.

\paragraph{Scenario B: credit transport.} $100{,}000$ scored applications per quarter;
a mis-transported cut-off costs $1$, a correctly priced account returns $0.55$, a
manual referral costs $0.08$. At $75\%$ coverage: $23{,}833$ versus $20{,}667$ harmful
decisions---$3{,}167$ per quarter---for a net gain of $4{,}908$ units. Optimising
coverage rather than fixing it at $75\%$ raises \KS{} to $9{,}054$ at $65\%$ coverage
against $3{,}483$ for the baseline, a factor of $2.6$.

\begin{figure}[t]\centering
\includegraphics[width=\textwidth]{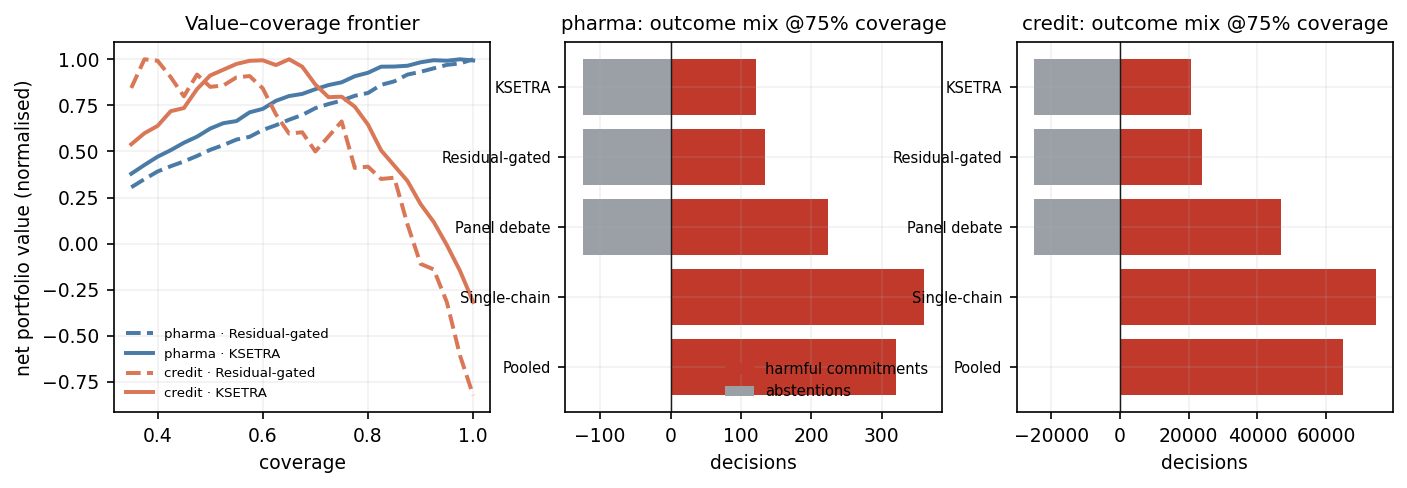}
\caption{Decision economics. Left: value--coverage frontier. Centre and right: the outcome mix at 75\% coverage, harmful commitments to the right of the axis and abstentions to the left.}
\end{figure}

\paragraph{When does this pay?} Sweeping the harm:benefit ratio from $0.25$ to $16$,
the advantage of cohomological gating grows monotonically ($24 \to 329$ units in
pharma; $2{,}177 \to 29{,}608$ in credit). Below a ratio of roughly $1$, the pharma
optimum sits near full coverage and the abstention machinery earns little. This is a
genuine boundary condition on the method and we state it plainly: \emph{cohomological
abstention is worth its complexity only where committing wrongly costs materially more
than committing rightly gains}---which is the regime of regulated therapeutic and
credit decisions, and is not the regime of exploratory analysis.

\section{A finance test case: a domain where the null is exact}\label{sec:finance}

Biology cannot falsify this theory cleanly, because the null---``a coherent global
claim exists''---is never known independently of the data. Finance can. We therefore
use it as a metrology bench.

\subsection{An exact test for the existence of a global claim}

\begin{figure}[t]\centering
\includegraphics[width=\textwidth]{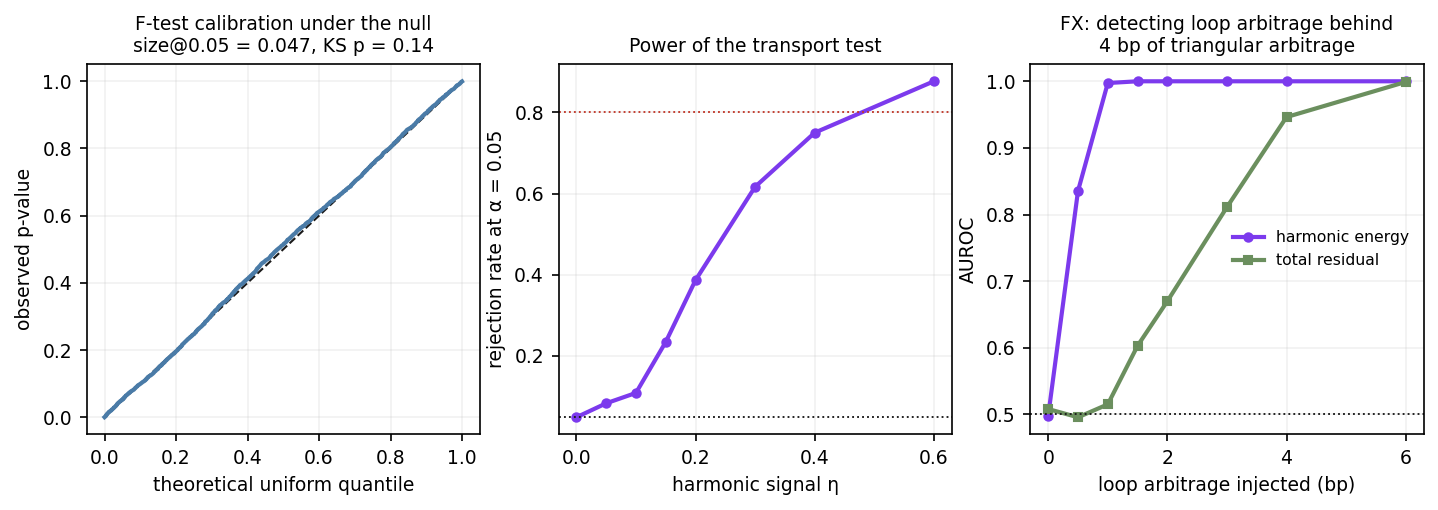}
\caption{Left: the $F$-test is exactly calibrated under the null over $4{,}000$ replications. Centre: power. Right: harmonic monitoring detects loop arbitrage at $1$\,bp that total-residual monitoring misses until $4$\,bp.}
\end{figure}

The degrees of freedom of an evidence network partition exactly:
\[
\underbrace{|E|}_{\text{comparisons}}
=\underbrace{(|V|-k)}_{\text{calibration}}
+\underbrace{\operatorname{rank}\delta^1}_{\text{local coherence}}
+\underbrace{\beta_1}_{\textsc{transport}},
\qquad k=\#\text{components}.
\]
Under $H_0$ (evidence coherent, homoscedastic noise) the harmonic and curl energies
are independent $\chi^2$ variates on their respective dimensions, so
\[
\boxed{\;F=\frac{\|h\|^2/\beta_1}{\|c\|^2/\operatorname{rank}\delta^1}\;\sim\;
F(\beta_1,\operatorname{rank}\delta^1)\;}
\]
is an \emph{exact} test that a global section exists. It is an analysis of variance in
which the residual is partitioned by topology rather than by design factors.

Over $4{,}000$ null replications on random nerves the test is correctly sized:
$0.1005$, $0.0470$ and $0.0077$ at nominal $0.10$, $0.05$ and $0.01$, with
Kolmogorov--Smirnov $D=0.018$, $p=0.135$ against uniformity. Power at
$\alpha=0.05$ with noise $\sigma=0.20$ reaches $0.62$ at $\eta=0.30$ and $0.88$ at
$\eta=0.60$, crossing $80\%$ near $\eta\approx0.47$.

\subsection{Foreign exchange: the null is not estimated, it is known}

In an arbitrage-free market the log-quote cochain is \emph{exactly} a coboundary,
since $\log S_{ij}=v_j-v_i$ for a numeraire value $v$. Triangular arbitrage is the
curl component. Loop arbitrage on cycles that no \emph{simultaneously executable}
triangle fills is the harmonic component. This identification is the practical content
of Theorem~\ref{thm:triage} for a trading desk: \emph{arbitrage is holonomy}.

Two claims must be kept apart here. That an arbitrage-free log-quote cochain is exactly
a coboundary is a fact about the market. The topology below is a \emph{constructed
illustration}: we build a $15$-currency complex with a G10 tier (USD hub plus liquid
crosses) and a deliverable-restricted tier of five Asian currencies whose regional
crosses are stipulated to print in a fixing window disjoint from the New York window of
their USD legs, so the USD triangulation of a regional cross is not simultaneously
executable. The fragmentation mechanism is plausible but stylised; a desk should
rebuild the nerve from its own executability constraints before drawing conclusions.

\begin{table}[h]\centering\small
\caption{Topology of the quote network. $30$ quoted pairs, $19$ closed triangles, of
which $14$ are simultaneously executable.}
\begin{tabular}{lrrrr}
\toprule
Complex & $|E|$ & calibration df & coherence df & $\dim H^1$\\
\midrule
G10 sub-complex only & 20 & 9 & 11 & \textbf{0}\\
Full complex, assuming all triples checkable & 30 & 14 & 16 & \textbf{0}\\
Full complex, executability-aware & 30 & 14 & 11 & \textbf{5}\\
\bottomrule
\end{tabular}
\end{table}

Two consequences. First, the G10 complex is cohomologically trivial: every cycle is
filled by an executable triangle, which is a \emph{proof} that triangular-arbitrage
monitoring is complete there---standard practice is correct, and now for a stated
reason. Second, the naive audit of the full complex also reports $\dim H^1=0$ and
concludes there is nothing to see; only the executability-aware nerve reveals five
independent loops carrying P\&L that no triangle check can detect. This is
Corollary~\ref{cor:blind} in a domain with money attached.

\begin{table}[h]\centering\small
\caption{$400$ simulated books per scenario, quoting noise $0.35$\,bp.}
\begin{tabular}{lrrr}
\toprule
Scenario & median $F$ & reject at $5\%$ & median worst round-trip (bp)\\
\midrule
Arbitrage-free & 0.97 & 0.035 & 2.5\\
Triangular arbitrage only (4\,bp) & 0.02 & 0.000 & 14.0\\
Loop arbitrage only (4\,bp) & 87.24 & \textbf{1.000} & 14.4\\
Both & 1.67 & 0.060 & 20.2\\
\bottomrule
\end{tabular}
\end{table}

The test fires on loop arbitrage and correctly ignores triangular arbitrage, which
lives in the curl space. The fourth row is a genuine weakness and we report it
plainly: when curl energy is large it inflates the denominator and \emph{masks} the
harmonic signal, so the $F$ ratio loses power. The remedy is to use harmonic energy
directly as a detection statistic with a resampled null. Doing so, and asking the
detector to find loop arbitrage hidden behind $4$\,bp of triangular arbitrage, the
harmonic detector reaches AUROC $0.835$ at $0.5$\,bp and $0.997$ at $1$\,bp, whereas
total-residual monitoring is still at $0.515$ at $1$\,bp and does not reach $0.95$
until $4$\,bp. \emph{Harmonic monitoring sees at one basis point what conventional
residual monitoring cannot see until four.}

\subsection{IFRS 9 and SR 11-7: transport across a cyclic regime space}

A PD model inventory over five retail segments and four macroeconomic regimes, where
the regime axis is a circle: expansion $\to$ late cycle $\to$ contraction $\to$
recovery $\to$ expansion. Joint-validation samples spanning three cells exist only in
the stable regimes. The resulting site has $20$ contexts, $42$ pairwise validations,
six joint-validation triples, and
\[
42 \;=\; \underbrace{19}_{\text{calibration}} \;+\; \underbrace{6}_{\text{coherence}}
\;+\; \underbrace{\mathbf{17}}_{\textsc{transport}} .
\]
Seventeen independent loops in the inventory can carry an incoherent PD story that no
amount of pairwise or triple validation will detect. This is a statement about the
\emph{governance architecture}, available before any data is examined.

Expressing the consequence as expected-credit-loss misstatement in basis points of
exposure (EAD \$100m per cell, LGD $45\%$, base PD $3.0\%$):

\begin{table}[h]\centering\small
\begin{tabular}{lrrrr}
\toprule
Per-comparison incoherence (log-odds) & 0.00 & 0.10 & 0.20 & 0.30\\
\midrule
Chain one validated route & 9.4 & 24.5 & 52.2 & 79.4\\
Coboundary projection & 5.1 & 5.2 & 5.0 & 5.3\\
\KS{} abstention rate & 0.02 & 0.98 & 1.00 & 1.00\\
\bottomrule
\end{tabular}
\caption{ECL misstatement, bp of exposure. Abstention is gated on significance
\emph{and} materiality ($>10$\,bp), because significance alone is not a control.}
\end{table}

Chaining a single validated route---the natural thing for both a human validator and
an agent to do---degrades roughly linearly in the incoherence, reaching $79$\,bp.
Projection is flat at $5$\,bp throughout: it is immune by construction, which is
Theorem~\ref{thm:path}. The abstention rate is the diagnostic that tells the
institution which is which.

\begin{figure}[t]\centering
\includegraphics[width=\textwidth]{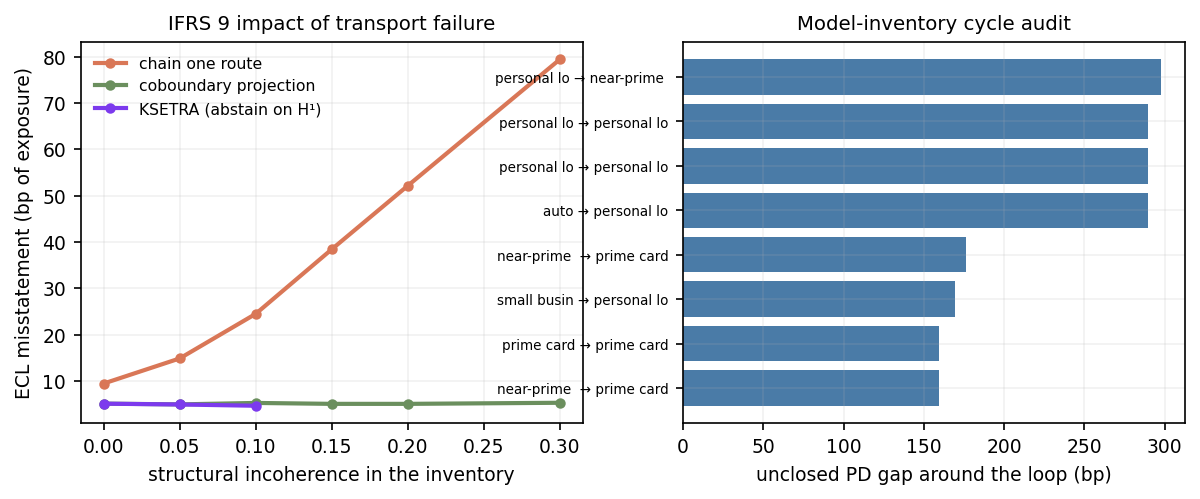}
\caption{Left: ECL misstatement under each transport policy. Right: the model-inventory cycle audit---loops where the PD story does not close.}
\end{figure}

\paragraph{The deliverable.} \KS{} emits an artefact a validation function can act on:
loops ranked by holonomy, expressed as the unclosed PD gap. In the audited inventory
the worst loops carry $0.68$ log-odds of holonomy, a $290$\,bp gap in PD that does not
close when the story is walked around the cycle. Each such loop names the segments and
regimes whose joint stratification would fill it. Under SR 11-7~\cite{sr117} this is a
materially different artefact from a confidence score: it is a finding, with a
location and a remediation.

\section{Where the obstruction comes from}\label{sec:mech}

An earlier version of this work injected the harmonic component directly and then
reported that it predicted error. That is circular, and the objection is fatal, so we
remove the injection entirely and let holonomy arise from a mechanism.

Let each coarse context $c$ be a union of fine strata $s$, with true effect
\[
\theta_{c,s}=\mu_c+\lambda_s+m\,\gamma_{c,s},
\]
where $\gamma$ is a context-by-stratum interaction (effect modification) and $m$ its
strength. An overlap $ij$ can only estimate the contrast on the case mix $w_{ij}$ it
actually contains, so the bridging evidence is
$\omega_{ij}=\sum_s w_{ij,s}\,(\theta_{j,s}-\theta_{i,s})$.

\begin{proposition}[Mechanism]\label{prop:mech}
If $m=0$ then $\omega_{ij}=\mu_j-\mu_i$ for every overlap regardless of $w_{ij}$, so
$\omega$ is exact and the obstruction vanishes. If $m\neq0$ the contrast depends on
which strata the overlap happened to sample, and generically $[\omega]\neq0$.
\end{proposition}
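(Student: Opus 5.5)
The first part is a direct computation; the second needs a genericity argument.

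\textbf{The case $m=0$.} Substituting $\theta_{c,s}=\mu_c+\lambda_s$ gives $\theta_{j,s}-\theta_{i,s}=\mu_j-\mu_i$ for every stratum $s$. The stratum effect $\lambda_s$ cancels because it does not depend on the context. Since the case mix $w_{ij}$ is a probability vector ($\sum_s w_{ij,s}=1$), we get $\omega_{ij}=\mu_j-\mu_i=(\delta^0\mu)_{ij}$ whatever the weights are. Hence $\omega\in\operatorname{im}\delta^0$. Theorem~\ref{thm:triage} then gives $h=c=0$, and Theorem~\ref{thm:path}(i) gives path-independence. One point to state explicitly is that this only requires $w_{ij}$ to be normalised; no assumption on overlap composition is needed.

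\textbf{The case $m\neq0$.} By linearity,
\[
\omega=\delta^0\mu+m\,\Gamma(w,\gamma),\qquad \Gamma_{ij}=\sum_s w_{ij,s}(\gamma_{j,s}-\gamma_{i,s}).
\]
So $[\omega]=m[\Gamma]$ in $H^1$, and it suffices to show that generically $\Gamma\notin\operatorname{im}\delta^0$. By Theorem~\ref{thm:path}(i) this means finding a cycle $z$ with $\langle\Gamma,z\rangle\neq0$.

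The map $(w,\gamma)\mapsto\langle\Gamma,z\rangle$ is a polynomial, bilinear in $w$ and $\gamma$. Its zero set is therefore either everything or a proper algebraic subset, which has measure zero. That is the precise sense of ``generically''. To rule out ``everything'', I will exhibit one witness.

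\textbf{The witness (main obstacle).} Finding a witness requires that the nerve actually has a cycle. If $\beta_1=0$, then $\Gamma$ is automatically exact (over $\R$, with $\operatorname{im}\delta^0=Z_1^\perp$) and the claim fails. So I will assume, as the statement implicitly does, that $Z_1\neq0$ and that the overlaps can differ in composition. The obstruction is really to $\Gamma$ being exact, i.e.\ $[\Gamma]\neq0$ in $H^1$; the curl part of $\Gamma$ is a separate matter, and if one wants $h\neq0$ specifically, the witness cycle should be taken to be non-bounding.

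\emph{Construction.} Fix a cycle $z$ and one edge $e=ij$ on it. Take two strata $s_1,s_2$. Let $w_{e}$ be concentrated on $s_1$, and let every other edge on $z$ be concentrated on $s_2$. Choose $\gamma$ with $\gamma_{\cdot,s_2}\equiv0$ and $\gamma_{j,s_1}-\gamma_{i,s_1}=1$. Then
\[
\langle\Gamma,z\rangle=\pm1\neq0 .
\]
This witness shows that the polynomial is not identically zero, which completes the genericity argument.
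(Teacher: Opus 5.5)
Your $m=0$ computation is the paper's own, since it is written into the statement. You are right that ``regardless of $w_{ij}$'' tacitly needs $\sum_s w_{ij,s}=1$. For $m\neq0$ the paper gives no argument beyond the numerical observation that harmonic energy rises from machine zero. A polynomial-genericity argument with a witness is therefore the right kind of addition, but yours certifies the wrong property.

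A nonzero pairing $\langle\Gamma,z\rangle$ with a cycle $z$ shows only that $\Gamma\notin\operatorname{im}\delta^0$, and a pure-curl $\Gamma$ already does that. Curl is exactly the locally detectable component that Theorem~\ref{thm:triage} separates from the obstruction. Since $\Gamma$ is in general not a cocycle, ``$[\omega]=m[\Gamma]$'' is not defined. The meaningful statement, and the quantity the paper tracks as harmonic energy, is $h(\omega)=m\,h(\Gamma)$, where $h$ is the orthogonal projection onto $\mathcal{H}^1$ and $\delta^0\mu$ projects to zero. The same conflation appears when you say $\beta_1=0$ forces $\Gamma$ to be exact. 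In the paper $\beta_1=\dim H^1$, and when triangles are present $\beta_1=0$ kills $h$ but not the curl; exactness would need $Z_1=0$.

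Your patch, choosing the witness cycle to be non-bounding, does not repair this. Write $z=z_H+z_B$ with $z_H\in\mathcal{H}^1$ and $z_B\in\operatorname{im}(\delta^1)^{\top}$. Then $\langle\Gamma,z\rangle=\langle h,z_H\rangle+\langle c,z_B\rangle$, so the pairing can be carried entirely by curl.

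For a concrete counterexample, fill the triangles $pxy$ and $xyq$ and add a hollow path $q\,r\,p$. Then $\beta_1=1$ and $z=p\,x\,y\,q\,r\,p$ is non-bounding. Oriented along $p\to\{x,y\}\to q\to r\to p$, the harmonic generator is $1/2$ on $px,py,xq,yq$, $1$ on $qr,rp$, and $0$ on $xy$; the $x\leftrightarrow y$ symmetry forces the zero. Taking $e=xy$ and putting every other edge on $s_2$ gives $\Gamma=\mathbf{1}_{xy}$, with $\langle\Gamma,z\rangle=\pm1$ but $h(\Gamma)=0$. Your construction also leaves the edges off $z$ unassigned, and $h$ depends on them.

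The fix is short.
\begin{itemize}
\item Assume $\beta_1>0$ rather than $Z_1\neq0$.
\item Take a nonzero harmonic cycle $z\in\mathcal{H}^1$ and an edge $e$ with $z_e\neq0$.
\item Put \emph{every} other edge of the nerve on $s_2$, so that $\Gamma=\mathbf{1}_e$.
\end{itemize}
Because $z$ is orthogonal to both $\operatorname{im}\delta^0$ and $\operatorname{im}(\delta^1)^{\top}$, you get $\langle h(\Gamma),z\rangle=\langle\Gamma,z\rangle=z_e\neq0$. Hence $(w,\gamma)\mapsto h(\Gamma)$ is a polynomial map that is not identically zero, and your measure-zero argument then applies to the harmonic obstruction itself.

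Equivalently, you may keep a non-bounding $z$ and choose the edge carefully. Some edge of $z$ has $(z_H)_e\neq0$, because $\langle z,z_H\rangle=\|z_H\|^2>0$. What matters is the choice of edge, not of cycle.
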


Non-transportability is therefore \emph{effect modification combined with
overlap-specific population composition}: Simpson's paradox on a network rather than
on a single table. Numerically, at $m=0$ the harmonic energy is
$1.52\times10^{-15}$---machine zero, not merely small---and it grows monotonically in
$m$ thereafter (Figure~\ref{fig:council}, left).

\subsection{A correction to a claim we previously made}

Under this mechanistic model, one of our earlier findings does not survive, and we
report the correction rather than the original.

\begin{table}[h]\centering\small
\caption{Spearman correlations with error, $n=1{,}500$, holonomy emergent.}
\begin{tabular}{lrr}
\toprule
Component & vs.\ projection error & vs.\ chained error\\
\midrule
Harmonic & $0.367$ & $0.399$\\
Curl & $0.333$ & $0.373$\\
Gradient & $0.054$ & $0.035$\\
\midrule
Harmonic, partial on curl and gradient & $\mathbf{0.193}$ ($p=4.6\times10^{-14}$) & ---\\
\bottomrule
\end{tabular}
\end{table}

We previously reported that curl energy carries no information about irreducible error.
\emph{That was an artefact of drawing the curl and harmonic components independently.}
In a mechanistic model a single latent cause---effect modification---generates both, so
they co-vary, and curl is nearly as predictive as harmonic. What survives is narrower
and, we think, more defensible: harmonic energy retains independent predictive value
after conditioning on the other two components, and the operational triage of
\S\ref{sec:triage} is untouched because it concerns \emph{remedy} rather than
prediction. Curl is repairable by re-measuring the triple; harmonic is not repairable
at that granularity by any amount of data. Two components may both predict error while
demanding different responses, and it is the response that the decomposition is for.

\subsection{Exactness of the $F$-test under unequal precision}

Bridging estimates differ in precision. The $F$-test of \S\ref{sec:finance} is exact
only under isotropic noise, and our initial expectation---that ignoring precision would
be conservative---was wrong.

\begin{table}[h]\centering\small
\caption{Empirical size at nominal $0.05$; $2{,}500$ null replications per cell.}
\begin{tabular}{lrrrr}
\toprule
Spread of per-edge standard deviations & $1\times$ & $2\times$ & $4\times$ & $8\times$\\
\midrule
Unweighted & 0.052 & 0.058 & \textbf{0.071} & 0.063\\
Precision-whitened & 0.052 & 0.052 & 0.061 & 0.047\\
\bottomrule
\end{tabular}
\end{table}

The unweighted test is mildly anti-conservative---a $42\%$ inflation of the
false-positive rate at fourfold spread. The remedy is a change of metric. With
$D=\operatorname{diag}(1/\sigma_e)$, send $\omega\mapsto D\omega$,
$\delta^0\mapsto D\delta^0$, $\delta^1\mapsto\delta^1D^{-1}$. Then
$\delta^1D^{-1}D\delta^0=\delta^1\delta^0=0$, so the whitened complex is a genuine
cochain complex and every result above applies verbatim. Whitening improves size and
power together ($0.621$ against $0.583$ at $\eta=0.30$). Residual distortion persists
at eightfold spread ($\mathrm{KS}\ p=0.010$); for heterogeneity beyond that we
recommend a permutation null rather than the $F$ reference.

\subsection{Evidence gaps hide obstruction rather than manufacturing it}

We had listed the confounding of gaps with obstruction as a limitation. The sign is the
opposite of what we assumed. Deleting an overlap destroys cycles faster than it
destroys the triangles that fill them, so on a $25$-context site with true
$\beta_1=21$, observed $\beta_1$ falls to $18.3$, $12.0$ and $8.9$ at $10\%$, $30\%$
and $40\%$ missing overlaps.

\begin{corollary}
$\dim H^1$ computed on an incomplete nerve is a \emph{lower bound} on the obstruction.
A sparse evidence base looks more coherent than it is.
\end{corollary}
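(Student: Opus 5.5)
The key fact is monotonicity: deleting an edge from the nerve can only shrink the space of harmonic cochains, never enlarge it. To prove this, let $N^{\mathrm{obs}}\subseteq N$ be the observed nerve, i.e.\ the full nerve $N=N(\Uc)$ with some set $M$ of edges deleted. Deleting an edge also deletes every triangle containing it, since a triangle $ijk$ requires all three pairwise overlaps to have been estimated. So $N^{\mathrm{obs}}$ is a subcomplex on the same vertex set, and I would compare $\beta_1=\dim H^1$ of the two through the degree-of-freedom identity of \S\ref{sec:finance}:
\[
\beta_1 \;=\; |E|-(|V|-k)-\operatorname{rank}\delta^1 .
\]

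I would first do the one-edge case, deleting a single edge $e$ together with the set $T_e$ of triangles containing it. The count changes by $\Delta\beta_1=-1+\Delta k-\Delta\operatorname{rank}\delta^1$.
\begin{itemize}
\item If $e$ is a bridge of the $1$-skeleton, then $\Delta k=+1$. No triangle contains a bridge, so $T_e=\emptyset$ and $\operatorname{rank}\delta^1$ is unchanged. Hence $\Delta\beta_1=0$.
\item Otherwise $\Delta k=0$, and $\Delta\beta_1=-1-\Delta\operatorname{rank}\delta^1$. Deleting rows and a column of $\delta^1$ cannot increase its rank, so $\Delta\operatorname{rank}\delta^1\le 0$.
\end{itemize}
The sign in the second case is exactly the delicate point: it makes $\beta_1$ go \emph{up} when removing the triangles drops the rank. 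The corollary as stated therefore needs a topological argument rather than pure counting.

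The cleanest route I see is the long exact sequence of the pair $(N,N^{\mathrm{obs}})$ in real cohomology:
\[
H^1(N,N^{\mathrm{obs}})\to H^1(N)\to H^1(N^{\mathrm{obs}})\to H^2(N,N^{\mathrm{obs}})\to\cdots
\]
If $H^2(N,N^{\mathrm{obs}})=0$, the restriction map $H^1(N)\to H^1(N^{\mathrm{obs}})$ is surjective, and then $\beta_1(N^{\mathrm{obs}})\le\beta_1(N)$. This is the lower-bound statement: true obstruction $\ge$ observed obstruction. Relative cochains are supported on the deleted edges and triangles. So the condition amounts to this: each deleted triangle can be killed by a relative $1$-cochain on the deleted edges, i.e.\ the relative coboundary $C^1(N,N^{\mathrm{obs}})\to C^2(N,N^{\mathrm{obs}})$ is onto. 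I would verify this by ordering the deleted edges and peeling: in the site model every triangle lies in one grid cell and shares a deleted edge, so $\delta^1$ restricted to deleted edges and triangles is triangular with unit pivots.

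The main obstacle is that surjectivity is not automatic in general. Deleting all three edges of an isolated filled triangle can create no cycle, but deleting one edge of a triangle that is the only filler of an otherwise-closed cycle is harmless, whereas pathological complexes exist where it fails. So I would state the corollary under the hypothesis that $H^2$ of the full nerve vanishes, which holds for the grid sites of the paper since their triangles are isolated $2$-cells. Under that hypothesis, I would show $H^2(N,N^{\mathrm{obs}})=0$ via the same sequence continued one step, using $H^2(N)=0$ and the fact that $H^1(N^{\mathrm{obs}})\to H^2(N,N^{\mathrm{obs}})$ is the only possible source. Failing that, I would fall back to stating the result as the empirical monotonicity reported above: $\beta_1$ decreasing from $21$ to $8.9$.
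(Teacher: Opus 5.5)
There is a genuine gap, and it is at the step you flag as the main obstacle. Your one-edge count is correct, and it already shows that monotonicity is not a general fact, so your opening ``key fact'' is false. The repair via $H^2(N)=0$ does not work. In the exact sequence $H^1(N)\to H^1(N^{\mathrm{obs}})\to H^2(N,N^{\mathrm{obs}})\to H^2(N)$, vanishing of $H^2(N)$ only makes the connecting map onto. $H^2(N,N^{\mathrm{obs}})$ then vanishes iff that map is zero, i.e.\ iff $H^1(N)\to H^1(N^{\mathrm{obs}})$ is onto. That is the conclusion you want, so the argument is circular.

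The conclusion is in fact false under your hypothesis. Take the square $1234$ with diagonal $13$ and both triangles $123$, $134$ filled. This is realisable as a nerve with $U_2\cap U_4=\emptyset$. It is a contractible disk with $H^1=H^2=0$ and $\beta_1=5-3-2=0$. Deleting the single overlap $13$ removes both triangles and leaves the hollow $4$-cycle, with $\beta_1=1$. The same example breaks your peeling claim: two deleted triangles share one deleted edge, so the relative coboundary $\R\to\R^2$ cannot be onto. It is also not pathological. It is what any grid cell filled by two triangles through a diagonal looks like. The paper's site description (``a minority admitting genuine three-way overlap'') does not rule this out, so your assertion that its triangles are isolated $2$-cells is an unsupported assumption.

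The hypothesis that actually works is the one hidden in that assertion, not $H^2(N)=0$. Suppose every edge lies in at most one filled triangle. Then deleting $e$ removes at most one row of $\delta^1$, and column $e$ is zero once the rows of $T_e$ are gone. So the rank drops by at most one, and your own count gives $\Delta\beta_1\in\{-1,0\}$. The hypothesis is inherited by subcomplexes, so you can iterate over the deleted set. Without some such condition the corollary is not a theorem.

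The paper does not prove it either. Its support is the heuristic that deletions ``destroy cycles faster than \ldots\ the triangles that fill them'', plus simulated averages of $18.3$, $12.0$ and $8.9$ against a true $21$. That is a statement about random deletion on average for one site family, which is essentially your fallback. Present it as such, or add the edge-in-at-most-one-triangle hypothesis and give the short rank argument above.
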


This is the dangerous direction, and it means the prevalence figures reported in the
inconsistency literature---already acknowledged there as underpowered---are
understatements for a second, structural reason.

\begin{figure}[t]\centering
\includegraphics[width=\textwidth]{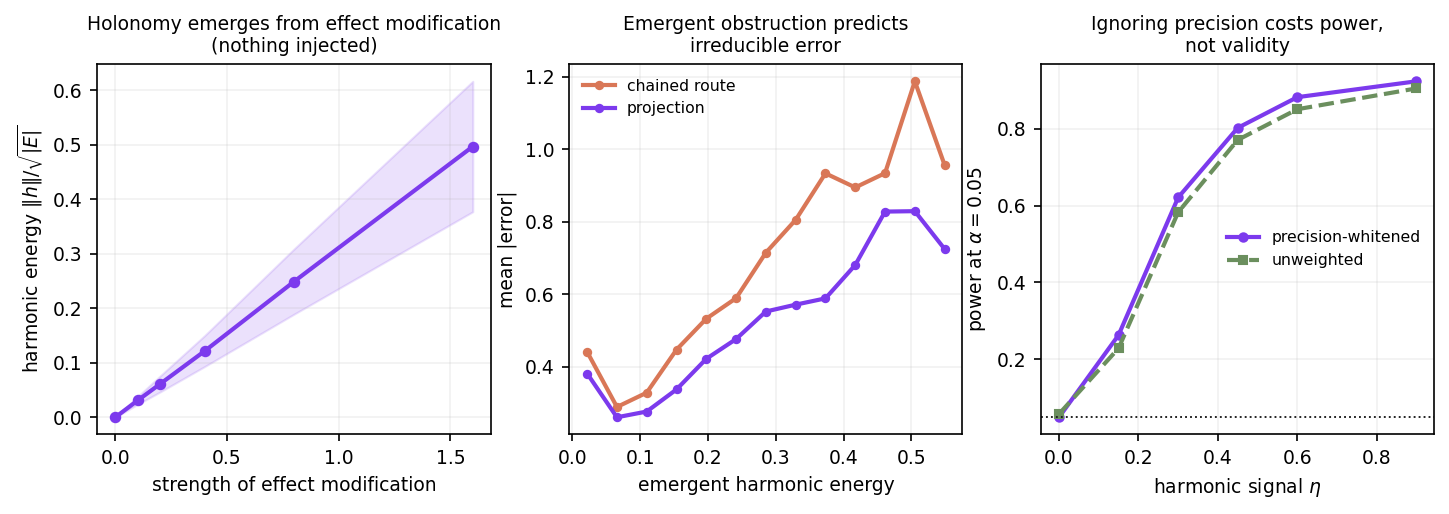}
\caption{Left: holonomy emerges from effect modification, vanishing to machine
precision when it is absent. Centre: emergent obstruction predicts irreducible error
for both estimators. Right: ignoring unequal precision costs power as well as
validity.}\label{fig:council}
\end{figure}

\section{Discussion}

\paragraph{What this changes.} Current context-preserving agents treat abstention as a
confidence heuristic emitted by a deliberation panel. We give it a definition: abstain
when the evidence cochain carries a non-zero cohomology class relative to the decision
tolerance. This is computable, auditable, reproducible, and---unlike a panel
verdict---accompanied by an instruction (which stratification to refine). For regulated
deployment, under GxP or SR 11-7~\cite{sr117}, an abstention that comes with a cycle
witness is a materially different artefact from an abstention that comes with a
confidence score.

\paragraph{Why cross-industry.} The credit instantiation is not decoration. It
demonstrates that obstruction can be a property of the \emph{context space}, not of the
data: because the macroeconomic regime axis is a circle, any model transported around
a full cycle is exposed to holonomy by construction. We suspect this is why
through-the-cycle credit models are recurrently surprised at regime turns, and we offer
it as a testable hypothesis rather than a claim.

\paragraph{Limitations.} (1) There is no real data in this paper. The simulations are controlled
validations of a theory, and while \S\ref{sec:mech} removes the circularity of the
original design by letting holonomy emerge from effect modification rather than
injecting it, an emergent mechanism in a model we wrote is still a model we wrote. The
paper should be read as theory with simulation support, and \S\ref{sec:prospective}
states the real-data study that would change that. (2) Correlated errors across overlaps are not handled: precision whitening
(\S\ref{sec:mech}) addresses unequal variances but not covariance, which would require
the full inverse-covariance metric and an estimate of it. (3) Evidence gaps and
obstruction interact, but not symmetrically---gaps \emph{hide} obstruction, so our
estimates are conservative; distinguishing ``no bridge'' from ``incoherent bridge''
still needs the sheaf-theoretic treatment with non-constant stalks. (4) We treat contexts
as given; eliciting the right factorisation is the hard applied problem and has no
theory here. (5) Real-valued claims only; categorical and ordinal claims require
cohomology with other coefficients, where the Hodge decomposition is unavailable.

\paragraph{Real-data validation.}\label{sec:prospective} The fastest credible real-data test does not require any
new collection. Published network meta-analyses report contrast estimates and their
standard errors; reconstructing the nerve from a published network, applying the
precision-whitened $F$-test and comparing against the node-splitting results the
original authors reported would place this framework directly against the established
method on its own ground, using only numbers already in print. A second test uses
existing open assets. \textsc{Medea}'s benchmarks, tools and the released yeast E-MAP
screen~\cite{medea} permit a direct test: construct the nerve over the 29 cell type
$\times$ 5 disease contexts, populate $\omega$ from PINNACLE and TranscriptFormer
contrasts, and test whether harmonic energy predicts (a) disagreement among
\textsc{MultiRoundDiscussion} panellists and (b) which of the agent's confident answers
are wrong. Prediction: the $79.1\%$ abstention rate of the literature-only ablation and
the $1.8\%$ rate of the LLM-only ablation bracket a harmonic-gated rate that is both
lower than the former and better targeted than the latter. If harmonic energy fails to
predict panel disagreement on real agent traces, the theory is wrong in the way that
matters.

\end{document}